\documentclass{article}

\usepackage[preprint]{neurips_2026}

\usepackage[utf8]{inputenc} 
\usepackage[T1]{fontenc}    
\usepackage{hyperref}       
\usepackage{url}            
\usepackage{booktabs}       
\usepackage{amsfonts}       
\usepackage{nicefrac}       
\usepackage{microtype}      
\usepackage{xcolor}         

\usepackage{booktabs}
\usepackage{multirow}
\usepackage{makecell}
\usepackage[dvipsnames,table]{xcolor}
\usepackage{graphicx}
\usepackage{caption}
\usepackage{subcaption}
\usepackage{tikz}
\usepackage{wrapfig}
\usepackage{adjustbox}
\usepackage{array}
\usepackage{listings}
\usepackage{etoc}

\usepackage{arydshln}
\usepackage{algpseudocode}
\usepackage{xcolor} 
\usepackage[normalem]{ulem}
\definecolor{lightblue}{RGB}{230, 245, 255}
\definecolor{lightgreen}{RGB}{230, 255, 230}
\definecolor{lightpink}{RGB}{255, 230, 240}
\definecolor{headercolor}{RGB}{70, 130, 180}

\usepackage{amsmath}
\usepackage{amssymb}
\usepackage{mathtools}
\usepackage{amsthm}
\usepackage{enumitem}

\usepackage{algorithm}
\usepackage{algpseudocode}

\definecolor{cacheblue}{RGB}{28, 105, 164}
\definecolor{actionred}{RGB}{202, 0, 32}
\newtheorem{theorem}{Theorem}
\newtheorem{corollary}{Corollary}
\newtheorem{assumption}{Assumption}
\newtheorem{remark}{Remark}

\title{Flash-dLLM: IO-Aware KV Caching and Parallel Decoding for Fast, Memory-Efficient Diffusion LLMs}

\author{%
  Quan Nguyen-Tri \\
  VILA Lab, MBZUAI \\
  Abu Dhabi, UAE \\
  \texttt{quan.nguyen@mbzuai.ac.ae} \\
  \And 
  Mukul Ranjan \\ 
  VILA Lab, MBZUAI \\
  Abu Dhabi, UAE \\
  \texttt{mukul.ranjan@mbzuai.ac.ae} \\
  \And 
  Zhiqiang Shen\thanks{Corresponding author.} \\ 
  VILA Lab, MBZUAI \\
  Abu Dhabi, UAE \\
  \texttt{zhiqiang.shen@mbzuai.ac.ae} \\
  \AND
  \small Code available at: \url{https://github.com/VILA-Lab/Flash-dLLM} \\
}

\begin{document}

\maketitle

\begin{abstract}

    Diffusion Large Language Models (dLLMs) have recently emerged as a promising alternative to autoregressive LLMs by enabling non-autoregressive text generation. However, their practical deployment remains limited by inefficient inference, largely due to the absence of effective Key-Value (KV) caching and scalable parallel decoding mechanisms. Existing acceleration methods typically study KV caching and parallel decoding in isolation, overlooking the I/O bottlenecks that arise when cache reuse and parallel token verification are jointly applied. In this work, we introduce $\textbf{ Flash-dLLM}$, a training-free inference acceleration framework for fast and memory-efficient dLLMs. Flash-dLLM first identifies GPU memory I/O as a dominant bottleneck in KV-cache-enabled dLLM inference and addresses it with an I/O-aware fused KV-cache kernel that reduces redundant memory movement. Building on this optimized cache mechanism, Flash-dLLM further proposes an efficient KV-cache-driven draft-and-verify decoding strategy, where the dLLM itself serves as both drafter and verifier without requiring an auxiliary model. This unified design enables faster decoding while preserving generation quality and improving scalability to longer sequences and larger batch size. Extensive experiments on mathematical reasoning and code-generation benchmarks demonstrate that Flash-dLLM consistently outperforms existing state-of-the-art dLLM acceleration methods in both inference speed and memory efficiency. In particular, it achieves $5.1\times$ and $11.0\times$ speedups over prior strongest baseline Elastic-Cache on GSM8K and HumanEval, respectively.
\end{abstract}

\section{Introduction}

Diffusion Large Language Models (dLLMs)~\cite{austin2021structured,sahoo2024simple,li2025survey,bie2025llada2,nie2025large,ye2025dream,ou2024your,yang2025mmada,shi2024simplified,gemini_diffusion2025,mercury2025,arriola2025block,lou2023discrete,nie2025scaling,xie2025dream} have recently emerged as a promising alternative to conventional autoregressive large language models~\cite{radford2018improving,achiam2023gpt,singh2025openai,comanici2025gemini,guo2025deepseek,yang2025qwen3} by reformulating text generation as an iterative denoising process. Instead of producing tokens strictly from left to right, dLLMs refine partially masked or noisy sequences over multiple denoising steps, enabling more flexible generation schedules and exposing opportunities for non-autoregressive decoding. This paradigm has shown encouraging potential for controllable generation, global sequence refinement, and improved parallelism. However, despite these algorithmic advantages, the practical inference efficiency of open-source dLLMs still lags behind mature autoregressive LLM systems, whose deployment has benefited from years of optimization around KV caching, attention kernels, and speculative decoding~\cite{kwon2023efficient, ye2025flashinfer, cai2024medusa, pope2023efficiently}.

A major bottleneck to efficient dLLM inference lies in the repeated construction and movement of Key-Value (KV) states across denoising iterations. In autoregressive LLMs, KV caching avoids recomputing attention states for previously generated tokens. In dLLMs, however, the sequence is repeatedly revisited, and the set of updated tokens changes across iterations. This causes frequent KV-cache reads, writes, and updates~\cite{liu2025dllm,song2026sparse,ma2025dinfer}, even when many cached states remain unchanged or contribute little to the current decoding step. As a result, naive KV caching may reduce floating-point computation but introduce substantial GPU memory-access overhead. In practice, the redundancy in KV-cache read and write operations can dominate runtime, limiting the actual speedup obtainable from cache reuse (Fig.~\ref{fig:speed-compare}).

Beyond this system-level redundancy, dLLM decoding also exhibits strong token-level sparsity. Although each denoising step processes the full sequence, only a small fraction of decoded or partially decoded tokens has a significant influence on the current prediction distribution. Many tokens remain stable across iterations or provide limited additional context for the next decoding decision. Treating all cached tokens as equally important therefore wastes memory bandwidth and computation~\cite{song2026sparse,huang2026mask,zhang2023h2o,xiao2024efficient,tang2024quest}. This observation suggests that efficient dLLM inference should not simply cache and reuse all KV states uniformly, instead, it should prioritize the subset of tokens that meaningfully affect the current decoding process while reducing unnecessary memory traffic for less influential tokens (Fig.~\ref{fig:num-token}).

\begin{figure}[t]
  \centering
  \begin{subfigure}[b]{0.32\linewidth}
    \centering
    \includegraphics[width=1\linewidth]{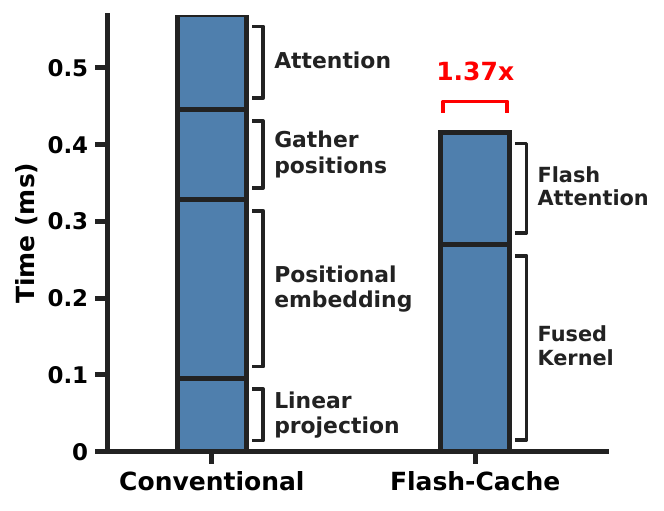}
    \vspace{-0.2in}
    \caption{IO bottleneck in KV caching}
    \label{fig:speed-compare}
  \end{subfigure}
  \hfill
  \begin{subfigure}[b]{0.33\linewidth}
    \centering
    \includegraphics[width=1\linewidth]{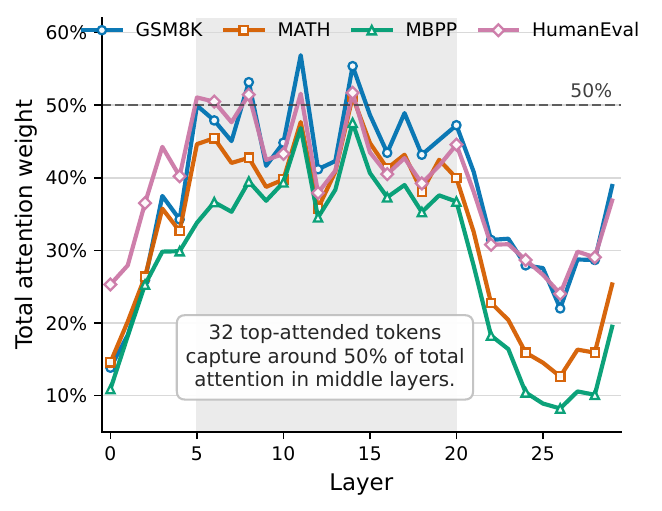}
    \vspace{-0.2in}
    \caption{Attention sparsity across layers}
    \label{fig:num-token}
  \end{subfigure}
  \hfill
  \begin{subfigure}[b]{0.33\linewidth}
    \centering
    \includegraphics[width=1\linewidth]{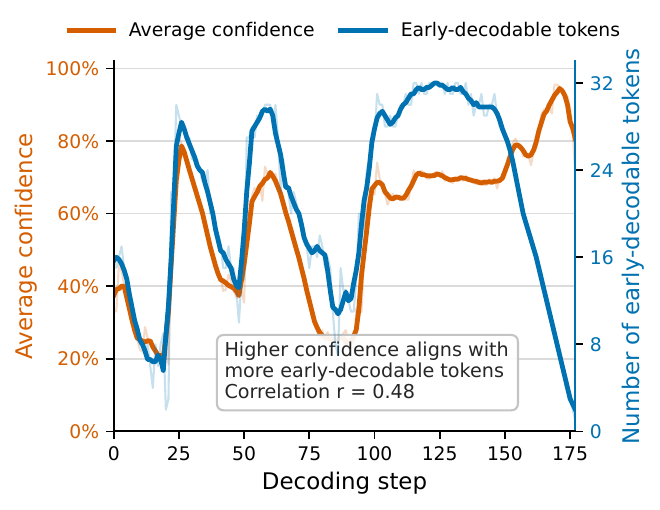}
    \vspace{-0.2in}
    \caption{Confidence and early decoding}
    \label{fig:average-prediction}
  \end{subfigure}
  \vspace{-0.08in}
  \caption{Motivating observations (LLaDA-1.5). (a)~Per-layer latency breakdown: the conventional cache pipeline is dominated by memory-bound operations; our fused kernel eliminates intermediate materialization for a 1.37$\times$ speedup. (b)~Top-32 most-attended decoded tokens capture up to 50\% of total attention in middle layers, motivating selective tracking over full recomputation. (c)~Average prediction confidence and number of early-decodable tokens are positively correlated across denoising steps, indicating that improving confidence directly increases tokens decoded per step.}
  \vspace{-0.15in}
\end{figure}

Another key point comes from the confidence dynamics of dLLM generation~\cite{wei2025accelerating,kong2025accelerating,israel2025accelerating,wudynamic}. During iterative denoising, many tokens become semantically determined at an early stage, even though their individual confidence scores may not yet exceed the conservative threshold required for immediate commitment. These tokens are often close to being correctly decoded, but existing decoding strategies delay their acceptance until later iterations, leading to redundant refinement steps. This creates a mismatch between token readiness and token commitment: the model already contains sufficient information to recover many positions, yet the decoding algorithm fails to exploit this early predictability due to insufficient confidence calibration.

Importantly, we observe that increasing the average confidence level of candidate decoded tokens directly improves the number of tokens that can be accepted correctly in each denoising step (Fig.~\ref{fig:average-prediction}). This indicates that the bottleneck is not only whether tokens can be predicted early, but whether their confidence can be made reliable enough for safe parallel commitment. A more effective decoding strategy should therefore aggregate useful contextual evidence, suppress low-impact cache interactions, and raise the confidence of promising token candidates. By doing so, dLLMs can decode more tokens per iteration while preserving generation quality, thereby reducing the total number of denoising steps required for completion.

Motivated by these observations, we propose {\bf Flash-dLLM}, a training-free inference framework that jointly optimizes IO-aware KV caching and cache-driven parallel decoding for fast, memory-efficient dLLMs. Flash-dLLM first reduces redundant KV-cache memory movement through an IO-aware fused cache kernel, which minimizes unnecessary read/write operations and improves cache locality during iterative denoising. It then exploits token-level sparsity by focusing cache reuse and verification on tokens that most affect the current decoding step. Finally, Flash-dLLM introduces a KV-cache-based parallel draft-and-verify mechanism that enables the dLLM itself to draft multiple early-decodable tokens and verify them with improved confidence, without requiring an auxiliary drafter or additional training. Together, these designs convert the inherent parallelism of diffusion language generation into practical wall-clock acceleration and improved memory scalability.

\begin{figure}
    \centering
    \includegraphics[width=1\linewidth]{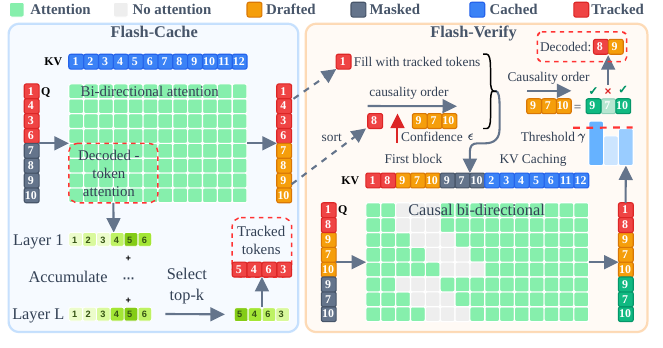}
    \caption{Overview of Flash-dLLM. \textbf{Left (Flash-Cache):} At each step, a fixed-size query of $\beta_m$ tokens (tracked + current masked window) attends bidirectionally to the full KV cache via the fused kernel. Tracked positions are the top-$k$ decoded tokens by attention importance, reselected each step to keep the cache-served representations aligned with full recomputation. \textbf{Right (Flash-Verify):} Draft predictions are sorted by confidence. Tokens above threshold $\epsilon$ are decoded directly. Remaining search tokens are duplicated in the query (once with the draft prediction, once with \texttt{[MASK]}) under a causal attention mask that prevents the two views from attending to each other. Tokens are accepted left-to-right when both views agree and the mask-view confidence exceeds $\gamma$.}
    \label{fig:overview}
\end{figure}

This study makes the following contributions:

\begin{itemize}
    \item We identify redundant read/write memory access as a key bottleneck in KV-cache-enabled dLLM inference, and propose an IO-aware fused KV-cache mechanism to reduce unnecessary GPU memory movement and improve cache locality. We further observe that only a small subset of decoded tokens significantly affects the current decoding step. Based on this, Flash-dLLM selectively prioritizes influential tokens during cache reuse and verification, improving efficiency without uniformly processing all cached states.
    \item We propose a training-free parallel decoding strategy where the dLLM itself serves as both drafter and verifier. By leveraging cached states to increase candidate-token confidence, Flash-dLLM enables more tokens to be decoded correctly in early stages, reducing denoising steps while preserving generation quality.
    \item Our approach achieves strong empirical speed and scalability gains. Extensive experiments on mathematical reasoning and code generation benchmarks demonstrate that the proposed Flash-dLLM improves inference speed and memory efficiency while maintaining generation quality, outperforming existing dLLM acceleration methods.
\end{itemize}

\section{Methodology}
\label{sec:method}

\subsection{Preliminaries}

\paragraph{KV Caching in Diffusion LLMs.} 
We adopt the sliding window decoding and KV caching strategy from Elastic-Cache \cite{nguyen2025attention}. Let $\mathcal{I} = \{1, \ldots, N\}$ represent all positions, $\mathcal{D}^{t}$ denote the newly decoded positions at step $t$, and $\mathcal{D}^{<t} = \mathcal{D}^{0} \cup \dots \cup \mathcal{D}^{t-1} $ denote the decoded positions up to step $t$, and $\mathcal{M}^{t}$ represent the remaining masked positions. At the initial step, we feed the entire sequence into the model to initialize the KV cache values for all positions $\mathcal{I}$: $\tilde{\mathbf{K}}^{t,l}_{[\mathcal{I}]}$ and $\tilde{\mathbf{V}}^{t,l}_{[\mathcal{I}]}$. For the subsequent step $t$, we perform KV caching for every token except the current query positions $\mathcal{Q}^{t} = \mathcal{D}^{t}\cup \mathcal{M}^{t}_{\beta_m}$. This includes the newly decoded tokens $\mathcal{D}^{t}$ and a fixed sliding window of masked tokens $\mathcal{M}^{t}_{\beta_m} = \mathcal{M}^{t}_{[1:\beta_\text{m}]}$, where $\beta_\text{m}$ is the fixed window size. The attention at step $t$ is computed as follows:

\begin{equation}\label{eq:kv-cache-dllm}
\mathbf{A}^{t,l}_{[\mathcal{Q}^t]} = \mathrm{softmax}\!\left(\frac{\mathbf{Q}^{t,l}_{[\mathcal{Q}^t]}(\tilde{\mathbf{K}}^{t,l}_{[\mathcal{I}]})^\top}{\sqrt{d_k}}\right)\tilde{\mathbf{V}}^{t,l}_{[\mathcal{I}]}, \quad \text{update: } \tilde{\mathbf{K}}^{t,l}_{[\mathcal{Q}^t]} = \mathbf{K}^{t,l}_{[\mathcal{Q}^t]},\;\; \tilde{\mathbf{V}}^{t,l}_{[\mathcal{Q}^t]} = \mathbf{V}^{t,l}_{[\mathcal{Q}^t]}.
\end{equation}

The cache is exact for positions in $\mathcal{Q}^t$ and approximate for the rest. Existing methods differ in how they manage this approximation: Fast-dLLM~\citep{wu2025fast} refreshes the entire cache at block boundaries, and Elastic-Cache~\citep{nguyen2025attention} triggers refresh adaptively based on attention-pattern drift. All these methods implement the cache update logic in PyTorch, issuing separate kernel launches for QKV projection, rotary positional embedding, cache writes, and attention computation per layer. 
This makes the caching operation memory-bound: the intermediate tensors are written to and read from GPU HBM multiple times, and the memory access cost dominates over the arithmetic cost.

\paragraph{Parallel Decoding in Diffusion LLMs.}
Diffusion LLMs generate by iteratively unmasking tokens from a fully masked sequence. At each step, the model predicts all masked positions in parallel, but these predictions are conditionally independent given $\mathbf{x}_t$. The model samples from the product of marginals $\prod_i p(\mathbf{x}_s^i \mid \mathbf{x}_t)$, while the true joint $p(\mathbf{x}_s^i, \mathbf{x}_s^j \mid \mathbf{x}_t) = p(\mathbf{x}_s^i \mid \mathbf{x}_t) \cdot p(\mathbf{x}_s^j \mid \mathbf{x}_t, \mathbf{x}_s^i)$ contains inter-token dependencies~\citep{wu2025fast}. Decoding many tokens at once amplifies this discrepancy and degrades coherence. Fast-dLLM~\citep{wu2025fast} tackles this issue by introducing \textit{Confidence-aware} decoding. This approach selectively unmasking only tokens whose confidence $c^i = \max_x p_\theta(x^i \mid \mathbf{x}_t)$ surpasses a predefined threshold $\epsilon$. Consequently, parallel decoding effectively approximates the true joint distribution when all decoded tokens exhibit high confidence.

\begin{figure}[htbp]
  \centering
  \begin{subfigure}[b]{0.48\linewidth}
    \centering
    \includegraphics[width=1\linewidth]{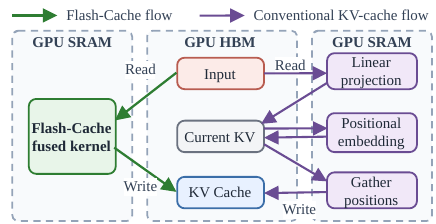}
    \vspace{-0.2in}
    \caption{Key-Value (KV) Caching in Flash-Cache}
    \label{fig:kv-cache-flash-attn}
  \end{subfigure}
  \hfill
  \begin{subfigure}[b]{0.48\linewidth}
    \centering
    \includegraphics[width=1\linewidth]{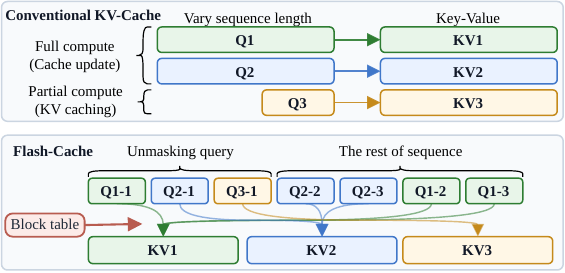}
    \vspace{-0.2in}
    \caption{Flash Attention in Flash-Cache}
    \label{fig:flash-attn-flash-cache}
  \end{subfigure}
  \vspace{-0.08in}
  \caption{IO-aware memory management in Flash-Cache. (a)~The fused kernel eliminates intermediate KV tensors by performing projection, RoPE, and cache write directly on the KV cache, reducing HBM read/write traffic and memory usage. (b)~Varying sequence lengths from partial (cached) and full (updated) computations are reorganized into contiguous blocks managed by a block table, enabling parallel execution without padding or synchronization overhead.}
  \vspace{-0.1in}
\end{figure}

\subsection{Flash-Cache: IO-aware key-value caching}
\label{subsec:flash-kv-cache}

Flash-Cache consists of three main components: (i) a fused kernel for Key-Value (KV) caching to alleviate the IO bottleneck, (ii) scheduled flash attention to manage the substantial variation in sequence lengths within batches, and (iii) constrained cache updates by monitoring the most-attended tokens.

\paragraph{KV caching fused kernel.}
At each transformer layer, a conventional KV cache implementation launches four separate CUDA kernels for the query set $\mathcal{Q}^t$: QKV projection, rotary positional embedding (RoPE), cache write, and attention. Each kernel writes its output to GPU high-bandwidth memory (HBM) before the next kernel reads it. QKV projection, RoPE, and cache writing each incur $\mathcal{O}(Qd_{\text{model}})$ memory traffic, while attention streams over the full cache with $\mathcal{O}(Nd)$ traffic. Thus, the total per-layer HBM traffic is approximately $4 \times \mathcal{O}(Qd_{\text{model}}) + \mathcal{O}(Nd)$. Because the non-attention operations have low arithmetic intensity, the cache-update path becomes memory-bound (Fig.~\ref{fig:kv-cache-flash-attn}).

Inspired by Flash Attention~\citep{dao2022flashattention}, we introduce a fused Flash-Cache kernel that combines QKV projection, RoPE, and cache writing (Fig.~\ref{fig:kv-cache-flash-attn}). The fused kernel performs projection and RoPE in SRAM and writes the resulting keys and values directly to the KV cache, eliminating intermediate key-value materialization. This reduces HBM traffic, lowers memory usage, and improves IO efficiency. As shown in Fig.~\ref{fig:speed-compare}, positional embedding and cache writing can dominate runtime relative to core computations such as attention and QKV projection. With Flash-Cache, we achieve a $1.37\times$ speedup on an RTX 3090 GPU.

\paragraph{Scheduled flash attention.}
KV caching and parallel decoding introduce new challenges for scaling diffusion LLMs to batched inference. While conventional attention can process variable-length sequences through padding and length-based grouping, diffusion LLMs make sequence lengths more dynamic and divergent. KV caching typically alternates between a caching stage, which computes only over a small window, and an update stage, which recomputes the full sequence; as a result, samples in the same batch may require substantially different computation at each iteration. Forcing all samples into the same stage can reduce efficiency and accuracy. This issue is further amplified by adaptive methods such as Elastic-Cache, where sequence length may vary within a layer, and by parallel decoding, which causes samples to progress at different rates.

To address this challenge, we extend Flash Attention~\citep{dao2022flashattention,shah2024flashattention,zadouri2026flashattention} by partitioning each batch into multiple sequence blocks and scheduling them with a block table that aligns query blocks with their corresponding key-value blocks (Fig.~\ref{fig:flash-attn-flash-cache}). This design mitigates length discrepancies during inference and provides a flexible mechanism for adding or removing blocks, enabling more adaptive decisions about when to cache or update. Unlike Flash Attention, which primarily optimizes IO efficiency within the attention computation, our Scheduled Flash Attention focuses on controlling which blocks are computed and in what order through the block table. This scheduling strategy is particularly effective for accelerating batched inference in diffusion LLMs.

The overall algorithm of Fused Kernel and Scheduled Flash Attention is presented in algorithm~\ref{alg:flash-cache}.

\begin{algorithm*}[t]
\caption{Flash-Cache}
\label{alg:flash-cache}
\scriptsize
\begin{algorithmic}[1]
\Require Input $\mathbf{X}\!\in\!\mathbb{R}^{M\times d}$, query $\mathbf{Q}\!\in\!\mathbb{R}^{M\times d}$, KV cache $\mathbf{K},\mathbf{V}\!\in\!\mathbb{R}^{(BN)\times d}$, output $\mathbf{A}\!\in\!\mathbb{R}^{M\times d}$, positions $\mathbf{P}\!\in\!\mathbb{R}^{M\times d}$, block size $\beta$, sequence length $N$, query length $M$, batch size $B$, heads $h$
\State Split $\mathbf{X}$ into $T_m=M/\beta$ blocks $\{\mathbf{X}_i\}_{i=1}^{T_m}$; create block table $\mathbf{B}$ and assign batch index $(\mathbf{X}_i,b)$
\State Split each block into heads $\mathbf{X}_{i,1},\dots,\mathbf{X}_{i,h}$
\end{algorithmic}

\vspace{0.3em}
\noindent
\begin{minipage}[t]{0.49\textwidth}
\textcolor{actionred}{\textbf{Flash Fused Kernel}}
\begin{algorithmic}[1]
\For{$1 \le i \le T_m$}
    \State Load $\mathbf{p}\gets\mathbf{P}_i$ 
    \hfill \textcolor{cacheblue}{\textit{// HBM $\to$ SRAM}}
    \For{$1 \le j \le h$}
        \State Load $\mathbf{x}\gets\mathbf{X}_{i,j}$ 
        \hfill \textcolor{cacheblue}{\textit{// HBM $\to$ SRAM}}
        \State $\mathbf{q},\mathbf{k},\mathbf{v}\gets\texttt{projection}(\mathbf{x})$; \quad $\mathbf{q},\mathbf{k}\gets\texttt{RoPE}(\mathbf{q},\mathbf{k})$
        \State Write $\mathbf{q}\to\mathbf{Q}_{i,j}$
        \State Write $\mathbf{k},\mathbf{v}\to\mathbf{K}_{[\mathbf{p},j]},\mathbf{V}_{[\mathbf{p},j]}$
        \hfill \textcolor{cacheblue}{\textit{// SRAM $\to$ HBM}}
    \EndFor
\EndFor
\end{algorithmic}
\end{minipage}
\hfill
\begin{minipage}[t]{0.49\textwidth}
\textcolor{actionred}{\textbf{Scheduled Flash Attention}}
\begin{algorithmic}[1]
\For{$1 \le i \le T_m$}
    \State Load $b\gets\mathbf{B}_i$
    \hfill \textcolor{cacheblue}{\textit{// HBM $\to$ SRAM}}
    \For{$1 \le j \le h$}
        \State Load $\mathbf{q}\gets\mathbf{Q}_{i,j}$
        \hfill \textcolor{cacheblue}{\textit{// HBM $\to$ SRAM}}
        \State Load $\mathbf{k},\mathbf{v}\gets\mathbf{K}_{[b,j]},\mathbf{V}_{[b,j]}$
        \hfill \textcolor{cacheblue}{\textit{// HBM $\to$ SRAM}}
        \State $\mathbf{a}\gets\texttt{attention}(\mathbf{q},\mathbf{k},\mathbf{v})$
        \State Write $\mathbf{a}\to\mathbf{A}_{i,j}$
        \hfill \textcolor{cacheblue}{\textit{// SRAM $\to$ HBM}}
    \EndFor
\EndFor
\end{algorithmic}
\end{minipage}

\vspace{0.2em}
\end{algorithm*}

\noindent \textbf{Selective cache update.}
Following our new design of KV caching and scheduled flash attention, we introduce a simple yet effective Constrained cache update to further enhance the scalability of our method. We observed that only 32 of the top-attended tokens can contribute approximately 50\% of the weight in attention computation among the middle layers (from layer 5 to 20, as shown in Fig.~\ref{fig:num-token}). This suggests that updating only a small subset of these tokens could be sufficient to retain most of the information loss. Motivated by this observation, we introduce the Selective cache update approach, which maintains and updates only a fixed set of the most-attended tokens (as depicted in Fig.~\ref{alg:flash-dllm}). 

Our method constructs a fixed-size query at each subsequent decoding step ($t>0$) using a sliding unmasking window of size $\beta_m$ and a fixed tracking budget $\beta_t$. Specifically, the query for step $t+1$ is defined as $\mathcal{Q}^{t+1}=\mathcal{M}_{\beta_m}^{t+1}\cup\mathcal{T}^{t+1},$
where $\mathcal{T}^{t+1}$ comprises the newly decoded tokens $\mathcal{D}^{t}$ and the most-attended tokens from the preceding step. The latter are selected from previously decoded tokens according to the attention they receive from masked tokens. The attention score of each decoded token $i$ at step $t$ is computed as follows:
\begin{equation}\label{eq:attn-importance}
a_i^t = \sum_{l=1}^{L} \frac{1}{H} \sum_{h=1}^{H} \frac{1}{|\mathcal{M}^t_{\beta_m}|}\sum_{j \in \mathcal{M}^t_{\beta_m}} \mathbf{S}^{t,l,h}_{j,i}, \quad i \in \mathcal{D}^{<t}
\end{equation}
where $\mathbf{S}^{t,l}$ is unnormalized attention logits. The score measures the extent to which the current masked queries are paying attention to the decoded position $i$. The tracking set is $\mathcal{T}^{t+1} = \operatorname{top\text{-}k}(\{a_i^t : i \in \mathcal{D}^{<t}\},\; \beta_\text{t} - |\mathcal{D}^{t}|) \cup \mathcal{D}^{t}$, and the next query set is $\mathcal{Q}^{t+1} = \mathcal{M}^{t+1}_{\beta_m} \cup \mathcal{T}^{t+1}$. Newly decoded tokens are prepended to $\mathcal{D}^{<t}$ before ranking, giving them automatic inclusion. All other decoded positions are served from cache without participating as queries, bounding per-step compute at $\beta_t+\beta_m$.

\begin{algorithm*}[t]
\caption{Flash-dLLM}
\begin{algorithmic}[1]
\Require Model $f_\theta$, prompt $\mathbf{x}_{\text{prompt}}$, generation length $N$, tracking budget $\beta_t$, masked-window size $\beta_m$, confidence threshold $\epsilon$, verify threshold $\gamma$.
\State $\mathbf{x}^0 \!\leftarrow\! \{\mathbf{x}_{\text{prompt}};\, \texttt{[MASK]}^N\}$; $\mathcal{D}^{<1} \!\leftarrow\! \{1,\dots,|\mathbf{x}_{\text{prompt}}|\}$; $\mathcal{M}^1 \!\leftarrow\! \{|\mathbf{x}_{\text{prompt}}|{+}1,\dots,|\mathbf{x}_{\text{prompt}}|{+}N\}$;

\State Allocate KV cache $\tilde{\mathbf{K}}^l, \tilde{\mathbf{V}}^l \in \mathbb{R}^{(B \cdot N_{\max}) \times d_{\text{model}}}$ for each layer $l$ \hfill \textcolor{cacheblue}{\textit{// pre-allocate once}}

\While{$\mathcal{M}^t \neq \emptyset$}
    \State $\mathcal{M}^t_{\beta_m} \leftarrow \mathcal{M}^t_{[:\beta_m]}$;\quad $\mathcal{Q}^t \leftarrow \mathcal{T}^{t} \cup \mathcal{M}^{t}_{\beta_m}$

    \State $p_\theta(\mathbf{x} \mid \mathbf{x}^t),\, \mathbf{S} \leftarrow \texttt{FusedForward}(f_\theta,\, \mathbf{x}^t_{[\mathcal{Q}^t]},\, \tilde{\mathbf{K}},\, \tilde{\mathbf{V}})$ \hfill \textcolor{actionred}{\textit{// Flash-Cache, Alg.~\ref{alg:flash-cache}}}

    \State $c^i, \hat{x}^i  \leftarrow \max_x\, p_\theta(x^i \mid \mathbf{x}^t)$ \quad for $i \in \mathcal{M}^t_{\beta_m}$;\quad $\mathcal{D}^t \leftarrow \{i \in \mathcal{M}^t_{\beta_m}: c^i \geq \epsilon\}$ ;\quad $\mathcal{S}^t \leftarrow \mathcal{M}^t_{\beta_m} \setminus \mathcal{D}^t$ 

    \If{$\texttt{verify}$ \textbf{and} $\mathcal{S}^t \neq \emptyset$} \hfill \textcolor{actionred}{\textit{// Flash-Verify}}
        \State Adjust tracking set $\mathcal{T}^{t}_v \gets \mathcal{T}^{t}_{[:2\beta_m - |\mathcal{D}^t| - 2|\mathcal{S}^t|]}$
        \State Construct verify query: $\{ \mathbf{x}_{[\mathcal{T}^{t}_v]}^t; \mathbf{\hat{x}}_{[\mathcal{D}^t]};\;\mathbf{\hat{x}}_{[\mathcal{S}^t]}; \mathbf{x}^t_{[\mathcal{S}^t]}\}$ 
        \State Construct causal mask for bi-directional attention $\mathbf{M}$
        \State $p_\theta(\mathbf{x} \mid \mathbf{\hat{x}}_{[\mathcal{S}^t]}) \leftarrow \texttt{FusedForward}(f_\theta,\, \text{verify query},\, \tilde{\mathbf{K}},\, \tilde{\mathbf{V}},\, \mathbf{M})$
        \State $\tilde{c}^i, \tilde{x}^i  \leftarrow \max_x\, p_\theta(x^i \mid \mathbf{\hat{x}}_{[\mathcal{S}^t]})$ \quad for $i \in \mathcal{S}^t$
        \State $\mathcal{D}^t \leftarrow \mathcal{D}^t \cup \{i \in \mathcal{S}^t : \hat{x}^i = \tilde{x}^i \;\wedge\; \tilde{c}^i \geq \gamma,\; \text{up to first mismatch}\}$
    \EndIf

    \State Decode: $\mathbf{x}^{t+1}_{[\mathcal{D}^t]} \leftarrow \mathbf{\hat{x}}_{[\mathcal{D}^t]}$;\quad $\mathcal{D}^{<t+1} \leftarrow \mathcal{D}^{<t} \cup \mathcal{D}^t$;\quad $\mathcal{M}^{t+1} \leftarrow \mathcal{M}^t \setminus \mathcal{D}^t$

    \State $a_i^t \leftarrow \sum_l \frac{1}{H|\mathcal{M}^t_{\beta_m}|} \sum_{h,\,j \in \mathcal{M}^t_{\beta_m}} \mathbf{S}^{t,l,h}_{j,i}$ \quad for $i \in \mathcal{D}^{<t}$ 
    \State $\mathcal{T}^{t+1} = \operatorname{top\text{-}k}(\{a_i^t : i \in \mathcal{D}^{<t}\},\; \beta_\text{t} - |\mathcal{D}^{t}|) \cup \mathcal{D}^{t}$ \quad $t \leftarrow t + 1$
\EndWhile
\State \Return $\mathbf{x}^t$
\end{algorithmic}
\label{alg:flash-dllm}
\end{algorithm*}

\vspace{-0.1in}
\subsection{Flash-Verify: KV-cache-driven draft-and-verify Parallel Decoding}
\label{subsec:flash-draft-verify}

Confidence-aware decoding~\citep{wu2025fast,wu2026fast} unmasks only tokens whose confidence $c^i = \max_x p_\theta(x^i \mid \mathbf{x}_t)$ exceeds a threshold $\epsilon$, discarding all others even when many are correct (Fig.~\ref{fig:average-prediction}). This creates a throughput ceiling on tasks where the model is uncertain, as few tokens pass the threshold per step. We propose Flash-Verify, a self-verification scheme in which the dLLM serves as both drafter and verifier, recovering correct predictions that confidence-aware decoding would waste. At each denoising step, a standard \textbf{draft pass} runs the model on $\mathcal{Q}^t = \mathcal{T}^t \cup \mathcal{M}^t_{\beta_m}$ against the full KV cache. The masked positions are sorted by confidence and partitioned into a confident set $\mathcal{D}^t$ (above $\epsilon$, accepted directly) and a search set $\mathcal{S}^t$ (below $\epsilon$, candidates for verification). 

A \textbf{verify pass} then constructs a new query with three groups: an adjusted tracking set $\mathcal{T}_v$ consisting of previously decoded tokens and $\mathcal{D}^t$, the search positions $\mathcal{S}^t$ filled with their draft predictions $\hat{x}^i$ (the \emph{draft view}), and the same positions filled with \texttt{[MASK]} (the \emph{mask view}). Both views share positional embeddings but are isolated by a causal attention mask loaded inside the fused Triton kernel: the tracked context cannot attend to the draft view, and the draft and mask views at the same position cannot attend to each other, so they produce independent predictions from shared context. A search token is accepted if both views agree and the mask-view confidence exceeds a threshold $\gamma$: $\text{accept}(i) = \mathbb{I}[\hat{x}^i = \tilde{x}^i] \cdot \mathbb{I}[\tilde{c}^i \geq \gamma]$,

where $\tilde{x}^i$ and $\tilde{c}^i$ are the mask view's prediction and confidence. Following speculative decoding conventions~\citep{leviathan2023fast}, tokens are accepted sequentially along the decoding order and all tokens after the first mismatch are rejected. 

The verify pass reuses the same fused kernel and pre-allocated KV cache; only the query tokens and the attention mask change, so the additional cost is proportional to $2\beta_m$ rather than the full sequence. Unlike prior draft-and-verify methods for dLLMs that rely on a separate autoregressive verifier~\citep{hu2025flashdlm} or multiple independent forward passes~\citep{wu2025free}, Flash-Verify requires no external model: the dLLM verifies its own predictions through the two-view attention mask, roughly doubling the tokens accepted per step (Fig.~\ref{fig:token per step}). Algorithm~\ref{alg:flash-dllm} summarizes the full procedure of Flash-dLLM, including both Flash-Cache and Flash-Verify.

\section{Experiments}
\label{sec:exp}
\subsection{Experimental Setup}
\label{subsec:exp-setup}

\noindent{\bf Implementation Details.} All experiments run on a single NVIDIA A100 80GB GPU. We evaluate \textbf{Flash-dLLM} on LLaDA-1.5~\citep{zhu2025llada} across GSM8K~\citep{cobbe2021training}, MATH~\citep{hendrycks2021measuring}, HumanEval~\citep{chen2021evaluating}, and MBPP~\citep{austin2021program}. We implement the fused KV-cache kernel in Triton 2.0. Default benchmark is GSM8K, with default hyperparameters: confidence threshold $\epsilon=0.9$, verify threshold $\gamma=0.8$, block size $\beta=16$, tracked budget $\beta_t=80$, sliding window size $\beta_m=64$, generation length 512. For fair comparison, we re-run all baselines under identical hardware and software configurations. \noindent{\bf Baselines.} We compare against three approaches: (1)~\textbf{No Cache}: standard dLLM inference without KV caching, under both greedy (fixed-step) and confidence-aware decoding; (2)~\textbf{Fast-dLLM}~\citep{wu2025fast}: prefix-caching with confidence-aware decoding; (3)~\textbf{Elastic-Cache}~\citep{nguyen2025attention}: adaptive KV caching with attention-pattern-based cache reuse. We report Flash-dLLM results under three configurations: greedy decoding (pure KV-cache speedup), confidence-aware decoding (KV-cache + parallel decoding), and Flash-Verify (KV-cache + draft-and-verify parallel decoding). 

\begin{figure}[t]
  \centering
  \begin{subfigure}[b]{0.48\linewidth}
    \centering
    \includegraphics[width=1\linewidth]{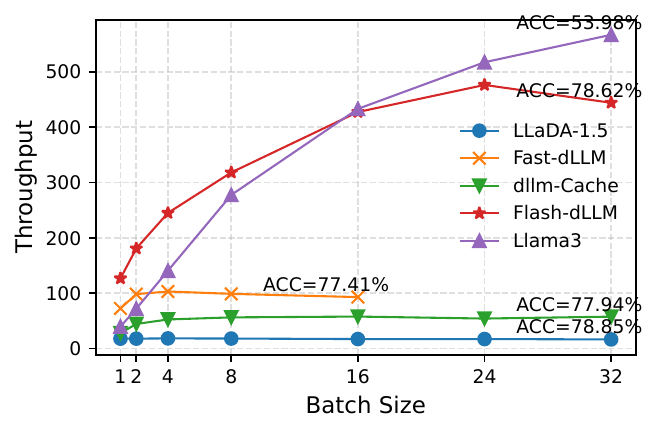}
    \vspace{-0.2in}
    \caption{Throughput vs.\ batch size}
    \label{fig:window-vs-block}
  \end{subfigure}
  \hfill
  \begin{subfigure}[b]{0.48\linewidth}
    \centering
    \includegraphics[width=1\linewidth]{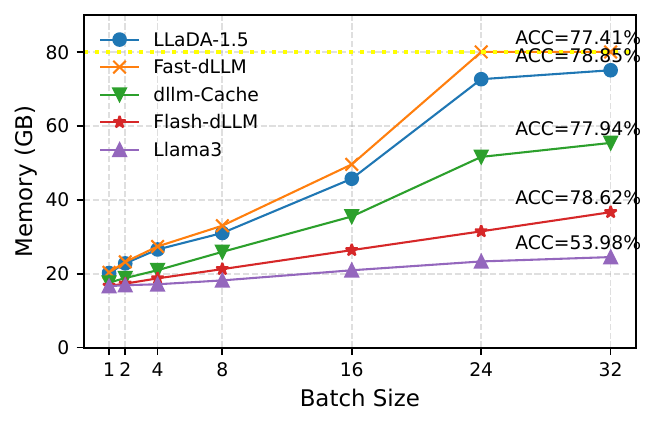}
    \vspace{-0.2in}
    \caption{Peak memory vs.\ batch size}
    \label{fig:update-freq}
  \end{subfigure}
  \vspace{-0.08in}
  \caption{Scalability on GSM8K-512 (1-shot, LLaDA-1.5). (a)~Flash-dLLM (Flash-Cache+Flash-Verify) throughput scales linearly to batch size 32, while Fast-dLLM OOMs at 24. Llama3-8B is an autoregressive reference. (b)~Flash-dLLM uses less memory than all baselines across batch sizes.}
  \label{fig:scalability}
  \vspace{-0.1in}
\end{figure}
\subsection{Main Results}

\begin{table}[t]
\centering
\caption{Accuracy and decoding efficiency of LLaDA-1.5 across different benchmarks and decoding configurations. Each cell reports accuracy (top) and throughput with speedup over greedy decoding without caching (bottom; \textcolor{blue}{blue}: tokens/s, \textcolor{orange}{orange}: speedup). \textbf{Bold} indicates the highest accuracy in each row, while \colorbox{yellow!20}{yellow shading} indicates the highest throughput.}
\label{tab:lla-1.5-instruct}
\resizebox{0.99\linewidth}{!}{%
\begin{tabular}{cc|cc|cccc|cc}
\toprule
\multicolumn{2}{c|}{} &
\multicolumn{2}{c|}{\bfseries Greedy} &
\multicolumn{4}{c|}{\bfseries Confidence-Aware} &
\multicolumn{2}{c}{\bfseries Flash-Verify} \\

\multicolumn{1}{c}{\bfseries Benchmark} &
\multicolumn{1}{c|}{\bfseries Len} &
\multicolumn{1}{c}{\bfseries No Cache} &
\multicolumn{1}{c|}{\bfseries Flash-Cache} &
\multicolumn{1}{c}{\bfseries No Cache} &
\multicolumn{1}{c}{\bfseries Fast-dLLM} &
\multicolumn{1}{c}{\bfseries Elastic-Cache} &
\multicolumn{1}{c|}{\bfseries Flash-Cache} &
\multicolumn{1}{c}{\bfseries No Cache} &
\multicolumn{1}{c}{\bfseries Flash-Cache} \\

\midrule
\multirow{2}{*}{\makecell{GSM8K \\ (5-shot)}}
& 256
& \makecell{80.36 \\ {\footnotesize
  \textcolor{blue}{6.7} (\textcolor{orange}{1.0$\times$})}}
& \makecell{82.87 \\ {\footnotesize
  \textcolor{blue}{56.8} (\textcolor{orange}{8.5$\times$})}}
& \makecell{80.44 \\ {\footnotesize
  \textcolor{blue}{22.5} (\textcolor{orange}{3.4$\times$})}}
& \makecell{80.59 \\ {\footnotesize
  \textcolor{blue}{51.2} (\textcolor{orange}{7.6$\times$})}}
& \makecell{81.88 \\ {\footnotesize
  \textcolor{blue}{45.9} (\textcolor{orange}{6.9$\times$})}}
& \makecell{82.34 \\ {\footnotesize
  \textcolor{blue}{144.9} (\textcolor{orange}{21.6$\times$})}}
& \makecell{\textbf{83.62} \\ {\footnotesize
  \textcolor{blue}{38.2} (\textcolor{orange}{5.7$\times$})}}
& \makecell{81.88 \\ \cellcolor{yellow!20}{\footnotesize
  \textcolor{blue}{194.9} (\textcolor{orange}{29.1$\times$})}} \\

& 512
& \makecell{81.35 \\ {\footnotesize
  \textcolor{blue}{2.6} (\textcolor{orange}{1.0$\times$})}}
& \makecell{82.94 \\ {\footnotesize
  \textcolor{blue}{54.8} (\textcolor{orange}{21.1$\times$})}}
& \makecell{81.88 \\ {\footnotesize
  \textcolor{blue}{17.2} (\textcolor{orange}{6.6$\times$})}}
& \makecell{80.82 \\ {\footnotesize
  \textcolor{blue}{36.8} (\textcolor{orange}{14.2$\times$})}}
& \makecell{82.79 \\ {\footnotesize
  \textcolor{blue}{41.7} (\textcolor{orange}{16.0$\times$})}}
& \makecell{82.87 \\ {\footnotesize
  \textcolor{blue}{149.4} (\textcolor{orange}{57.5$\times$})}}
& \makecell{82.71 \\ {\footnotesize
  \textcolor{blue}{32.2} (\textcolor{orange}{12.4$\times$})}}
& \makecell{\textbf{83.02} \\ \cellcolor{yellow!20}{\footnotesize
  \textcolor{blue}{210.6} (\textcolor{orange}{81.0$\times$})}} \\

\midrule
\multirow{2}{*}{\makecell{MATH \\ (4-shot)}}
& 256
& \makecell{33.52 \\ {\footnotesize
  \textcolor{blue}{8.5} (\textcolor{orange}{1.0$\times$})}}
& \makecell{\textbf{37.22} \\ {\footnotesize
  \textcolor{blue}{67.6} (\textcolor{orange}{8.0$\times$})}}
& \makecell{33.60 \\ {\footnotesize
  \textcolor{blue}{22.3} (\textcolor{orange}{2.6$\times$})}}
& \makecell{32.74 \\ {\footnotesize
  \textcolor{blue}{44.4} (\textcolor{orange}{5.2$\times$})}}
& \makecell{33.26 \\ {\footnotesize
  \textcolor{blue}{40.6} (\textcolor{orange}{4.8$\times$})}}
& \makecell{36.80 \\ {\footnotesize
  \textcolor{blue}{144.3} (\textcolor{orange}{17.0$\times$})}}
& \makecell{36.98 \\ {\footnotesize
  \textcolor{blue}{38.1} (\textcolor{orange}{4.5$\times$})}}
& \makecell{36.56 \\ \cellcolor{yellow!20}{\footnotesize
  \textcolor{blue}{189.7} (\textcolor{orange}{22.3$\times$})}} \\

& 512
& \makecell{35.63 \\ {\footnotesize
  \textcolor{blue}{5.0} (\textcolor{orange}{1.0$\times$})}}
& \makecell{37.40 \\ {\footnotesize
  \textcolor{blue}{66.2} (\textcolor{orange}{13.2$\times$})}}
& \makecell{35.56 \\ {\footnotesize
  \textcolor{blue}{20.3} (\textcolor{orange}{4.1$\times$})}}
& \makecell{33.68 \\ {\footnotesize
  \textcolor{blue}{44.4} (\textcolor{orange}{8.9$\times$})}}
& \makecell{35.84 \\ {\footnotesize
  \textcolor{blue}{41.4} (\textcolor{orange}{8.3$\times$})}}
& \makecell{37.08 \\ {\footnotesize
  \textcolor{blue}{149.9} (\textcolor{orange}{30.0$\times$})}}
& \makecell{\textbf{37.76} \\ {\footnotesize
  \textcolor{blue}{32.8} (\textcolor{orange}{6.6$\times$})}}
& \makecell{35.98 \\ \cellcolor{yellow!20}{\footnotesize
  \textcolor{blue}{210.1} (\textcolor{orange}{42.0$\times$})}} \\

\midrule
\multirow{2}{*}{\makecell{HumanEval \\ (0-shot)}}
& 256
& \makecell{\textbf{43.29} \\ {\footnotesize
  \textcolor{blue}{7.0} (\textcolor{orange}{1.0$\times$})}}
& \makecell{42.68 \\ {\footnotesize
  \textcolor{blue}{79.0} (\textcolor{orange}{11.3$\times$})}}
& \makecell{42.68 \\ {\footnotesize
  \textcolor{blue}{17.5} (\textcolor{orange}{2.5$\times$})}}
& \makecell{34.75 \\ {\footnotesize
  \textcolor{blue}{18.7} (\textcolor{orange}{2.7$\times$})}}
& \makecell{36.59 \\ {\footnotesize
  \textcolor{blue}{20.9} (\textcolor{orange}{3.0$\times$})}}
& \makecell{40.85 \\ {\footnotesize
  \textcolor{blue}{169.4} (\textcolor{orange}{24.2$\times$})}}
& \makecell{37.20 \\ {\footnotesize
  \textcolor{blue}{63.2} (\textcolor{orange}{9.0$\times$})}}
& \makecell{39.63 \\ \cellcolor{yellow!20}{\footnotesize
  \textcolor{blue}{209.2} (\textcolor{orange}{29.9$\times$})}} \\

& 512
& \makecell{40.85 \\ {\footnotesize
  \textcolor{blue}{3.2} (\textcolor{orange}{1.0$\times$})}}
& \makecell{41.46 \\ {\footnotesize
  \textcolor{blue}{74.5} (\textcolor{orange}{23.3$\times$})}}
& \makecell{39.63 \\ {\footnotesize
  \textcolor{blue}{9.7} (\textcolor{orange}{3.0$\times$})}}
& \makecell{36.59 \\ {\footnotesize
  \textcolor{blue}{15.4} (\textcolor{orange}{4.8$\times$})}}
& \makecell{37.80 \\ {\footnotesize
  \textcolor{blue}{16.8} (\textcolor{orange}{5.2$\times$})}}
& \makecell{\textbf{42.07} \\ {\footnotesize
  \textcolor{blue}{145.7} (\textcolor{orange}{45.5$\times$})}}
& \makecell{37.20 \\ {\footnotesize
  \textcolor{blue}{57.5} (\textcolor{orange}{18.0$\times$})}}
& \makecell{40.24 \\ \cellcolor{yellow!20}{\footnotesize
  \textcolor{blue}{185.6} (\textcolor{orange}{58.0$\times$})}} \\

\midrule
\multirow{2}{*}{\makecell{MBPP \\ (3-shot)}}
& 256
& \makecell{38.00 \\ {\footnotesize
  \textcolor{blue}{2.4} (\textcolor{orange}{1.0$\times$})}}
& \makecell{41.20 \\ {\footnotesize
  \textcolor{blue}{62.0} (\textcolor{orange}{25.8$\times$})}}
& \makecell{38.00 \\ {\footnotesize
  \textcolor{blue}{14.2} (\textcolor{orange}{5.9$\times$})}}
& \makecell{34.60 \\ {\footnotesize
  \textcolor{blue}{28.0} (\textcolor{orange}{11.7$\times$})}}
& \makecell{41.20 \\ {\footnotesize
  \textcolor{blue}{32.7} (\textcolor{orange}{13.6$\times$})}}
& \makecell{\textbf{41.80} \\ {\footnotesize
  \textcolor{blue}{115.9} (\textcolor{orange}{48.3$\times$})}}
& \makecell{41.40 \\ {\footnotesize
  \textcolor{blue}{37.7} (\textcolor{orange}{15.7$\times$})}}
& \makecell{38.20 \\ \cellcolor{yellow!20}{\footnotesize
  \textcolor{blue}{148.0} (\textcolor{orange}{61.7$\times$})}} \\

& 512
& \makecell{38.20 \\ {\footnotesize
  \textcolor{blue}{1.0} (\textcolor{orange}{1.0$\times$})}}
& \makecell{39.80 \\ {\footnotesize
  \textcolor{blue}{58.5} (\textcolor{orange}{58.5$\times$})}}
& \makecell{38.60 \\ {\footnotesize
  \textcolor{blue}{11.5} (\textcolor{orange}{11.5$\times$})}}
& \makecell{36.20 \\ {\footnotesize
  \textcolor{blue}{17.8} (\textcolor{orange}{17.8$\times$})}}
& \makecell{39.00 \\ {\footnotesize
  \textcolor{blue}{32.8} (\textcolor{orange}{32.8$\times$})}}
& \makecell{\textbf{40.20} \\ {\footnotesize
  \textcolor{blue}{102.2} (\textcolor{orange}{102.2$\times$})}}
& \makecell{39.40 \\ {\footnotesize
  \textcolor{blue}{31.6} (\textcolor{orange}{31.6$\times$})}}
& \makecell{39.00 \\ \cellcolor{yellow!20}{\footnotesize
  \textcolor{blue}{148.2} (\textcolor{orange}{148.2$\times$})}} \\

\bottomrule
\end{tabular}%
}
\end{table}

Table~\ref{tab:lla-1.5-instruct} compares the accuracy and decoding efficiency of the evaluated KV-caching and parallel decoding on mathematical reasoning and code-generation benchmarks.

\noindent{\bf Throughput.}
Flash-Cache substantially accelerates greedy decoding, yielding speedups of $8.0\times$--$58.5\times$. Under confidence-aware decoding, the range increases to $17.0\times$--$102.2\times$, indicating that cache acceleration remains effective with parallel decoding. Combining Flash-Verify and Flash-Cache achieves the highest throughput in all eight settings, reaching $148.0$--$210.6$ tokens/s and speedups of $22.3\times$--$148.2\times$. Relative to confidence-aware Flash-Cache, the second-fastest configuration throughout, it improves throughput by approximately $23.5\%$--$45.0\%$. The gains are larger at longer generation lengths: from 256 to 512 tokens, the speedup increases from $29.1\times$ to $81.0\times$ on GSM8K, $22.3\times$ to $42.0\times$ on MATH, $29.9\times$ to $58.0\times$ on HumanEval, and $61.7\times$ to $148.2\times$ on MBPP.

\noindent{\bf Accuracy.}
Accuracy exhibits a task-dependent trade-off. On GSM8K-512, Flash-Verify with Flash-Cache achieves both the highest accuracy ($83.02\%$) and throughput ($210.6$ tokens/s). Elsewhere, the fastest configuration is not consistently the most accurate: Flash-Verify without caching performs best on GSM8K-256 and MATH-512, whereas confidence-aware Flash-Cache leads on HumanEval-512 and both MBPP settings. On mathematical reasoning tasks, the combined method remains within $1.78$ percentage points of the best accuracy. Larger gaps arise for 256-token code generation, reaching $3.66$ points on HumanEval and $3.60$ points on MBPP. Overall, Flash-Verify with Flash-Cache provides the strongest throughput-oriented configuration.

\begin{table*}[t]
\centering
\caption{Comparison of accuracy and decoding throughput across different methods.}
\label{tab:accuracy-throughput}
\resizebox{\textwidth}{!}{%
\begin{tabular}{l|ccccccccc}
\toprule
\textbf{Metric}
& \textbf{dKV-Cache}
& \textbf{FlashDLM}
& \textbf{dLLM-Cache}
& \textbf{Dyna-dLLM}
& \textbf{Fast-dLLM}
& \textbf{FreeDave}
& \textbf{Elastic-Cache}
& \shortstack{\textbf{Flash-Cache}\\\textbf{(conf-aware)}}
& \shortstack{\textbf{Flash-Cache}\\\textbf{+ Flash-Verify}} \\
\midrule
\textbf{Acc. (\%)}
& 81.50
& 79.91
& 80.97
& 79.32
& 80.82
& 80.97
& 82.79
& 82.87
& \textbf{83.02} \\
\textbf{TPS}
& \textcolor{blue}{14.9} (\textcolor{orange}{5.7$\times$})
& \textcolor{blue}{15.7} (\textcolor{orange}{6.0$\times$})
& \textcolor{blue}{16.8} (\textcolor{orange}{6.5$\times$})
& \textcolor{blue}{38.4} (\textcolor{orange}{14.8$\times$})
& \textcolor{blue}{36.8} (\textcolor{orange}{14.2$\times$})
& \textcolor{blue}{42.8} (\textcolor{orange}{16.5$\times$})
& \textcolor{blue}{41.7} (\textcolor{orange}{16.0$\times$})
& \textcolor{blue}{149.4} (\textcolor{orange}{57.5$\times$})
& \textcolor{blue}{210.6} (\textcolor{orange}{81.0$\times$}) \\
\bottomrule
\end{tabular}%
}
\end{table*}

\noindent{\bf More baselines.}
Table~\ref{tab:accuracy-throughput} compares the accuracy and decoding throughput of the evaluated cache and parallel-verification methods. Conventional baselines achieve accuracies of $79.32\%$--$82.79\%$ and throughputs of $14.9$--$42.8$ tokens/s. Elastic-Cache yields the highest baseline accuracy ($82.79\%$), whereas FreeDave achieves the highest baseline throughput ($42.8$ tokens/s). 
Flash-Cache with confidence-aware decoding achieves $82.87\%$ accuracy and $149.4$ tokens/s, while its integration with Flash-Verify further improves performance to $83.02\%$ accuracy and $210.6$ tokens/s. These results demonstrate that combining Flash-Cache with Flash-Verify yields substantial gains in both KV-cache efficiency and parallel-decoding performance over existing baselines.

\subsection{Ablation Studies and Analysis}

\noindent\textbf{Scalability.}
Figure~\ref{fig:window-vs-block} reports throughput as a function of batch size on GSM8K-512 (1-shot) using LLaDA-1.5. Flash-dLLM scales nearly linearly to a batch size of $32$ without exhausting GPU memory, whereas Fast-dLLM encounters an out-of-memory error at batch size $24$. Table~\ref{tab:ablation-bs-speed} extends this analysis to GSM8K-512 (5-shot), showing that all Flash-dLLM configurations (greedy, confident, verify) scale monotonically from batch sizes $1$ to $32$. In particular, Flash-Cache with Flash-Verify reaches $199.8$ tokens/s at batch size $32$. Figure~\ref{fig:update-freq} further shows that, at batch size $16$, Flash-dLLM uses approximately $26$ GB of GPU memory, compared with $50$ GB for Fast-dLLM, representing a reduction of about $48\%$. This improvement arises from the flat, preallocated cache layout, which avoids dynamic memory allocation and the padding overhead associated with conventional four-dimensional KV-cache implementations.

\begin{figure}[t]
  \centering
  \begin{subfigure}[b]{0.49\linewidth}
    \centering
    \includegraphics[width=1\linewidth]{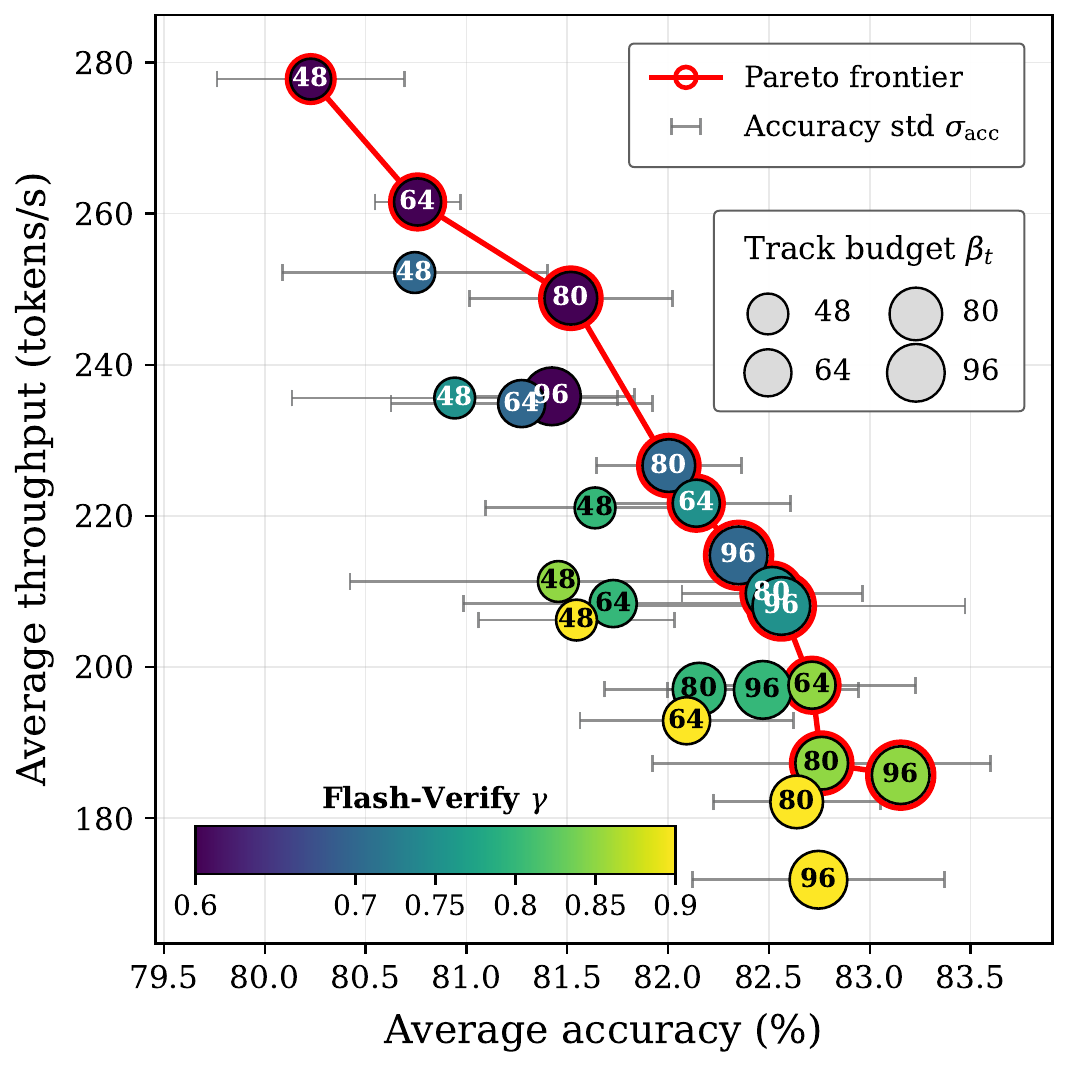}
    \vspace{-0.2in}
    \caption{Pareto frontier of Flash-Verify}
    \label{fig:pareto flash}
  \end{subfigure}
  \hfill
  \begin{subfigure}[b]{0.49\linewidth}
    \centering
    \includegraphics[width=1\linewidth]{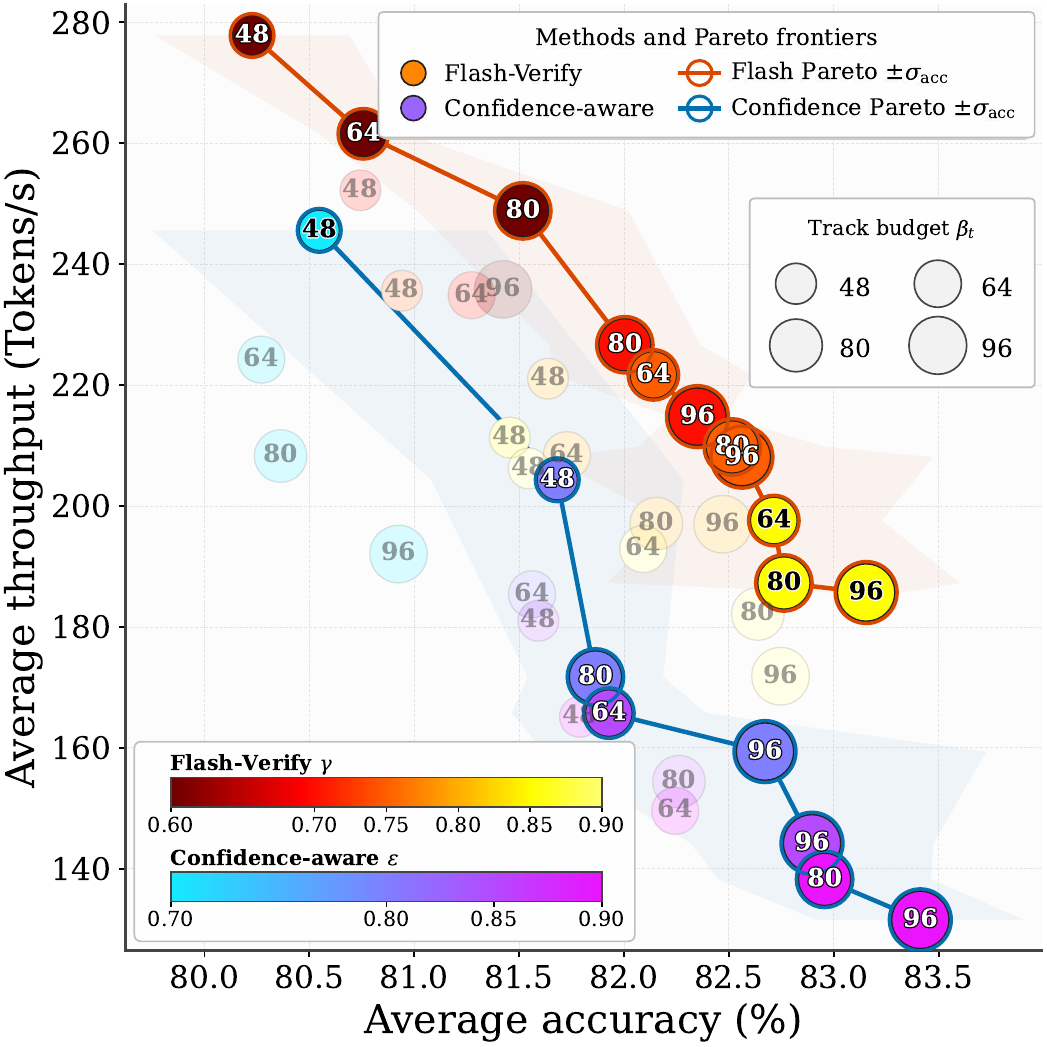}
    \vspace{-0.2in}
    \caption{Pareto frontiers of Flash-Verify and Confidence}
    \label{fig:pareto flash vs conf}
  \end{subfigure}
  \vspace{-0.08in}
  \caption{Accuracy–throughput trade-off under different track-budget values $\beta_t$ and denoising settings $\gamma$/$\epsilon$. The mean and standard deviation are computed over five random seeds.}
  \label{fig:fig}
  \vspace{-0.1in}
\end{figure}

\noindent\textbf{Accuracy--throughput trade-off.}
Figure~\ref{fig:pareto flash} characterizes the accuracy--throughput trade-off across track budgets $\beta_t$ and Flash-Verify thresholds $\gamma$. In general, accuracy and throughput are inversely related: increasing accuracy from approximately $80.2\%$ to $83.2\%$ reduces throughput from $278$ to $186$ tokens/s. Larger track budgets generally improve accuracy at the cost of additional computation, whereas $\gamma$ provides finer control over parallelism within a fixed budget. The Pareto frontier identifies the optimal operating points across these configurations. In particular, a favorable trade-off can be obtained by maintaining a relatively large $\beta_t$ while reducing $\gamma$ to promote greater decoding parallelism.

\noindent\textbf{Flash-Verify vs.\ confidence-aware decoding.}
Figure~\ref{fig:pareto flash vs conf} compares Flash-Verify with confidence-aware decoding. Across most of their shared accuracy range, Flash-Verify delivers substantially higher throughput. At approximately $82.6\%$--$82.9\%$ accuracy, it achieves $190$--$210$ tokens/s, compared with $140$--$160$ tokens/s for confidence-aware decoding. Although confidence-aware decoding attains a slightly higher peak accuracy of approximately $83.4\%$, its throughput decreases to about $131$ tokens/s. Overall, Flash-Verify provides a more favorable accuracy--throughput trade-off.

\begin{figure}[t]
  \centering
  \includegraphics[width=0.9\linewidth]{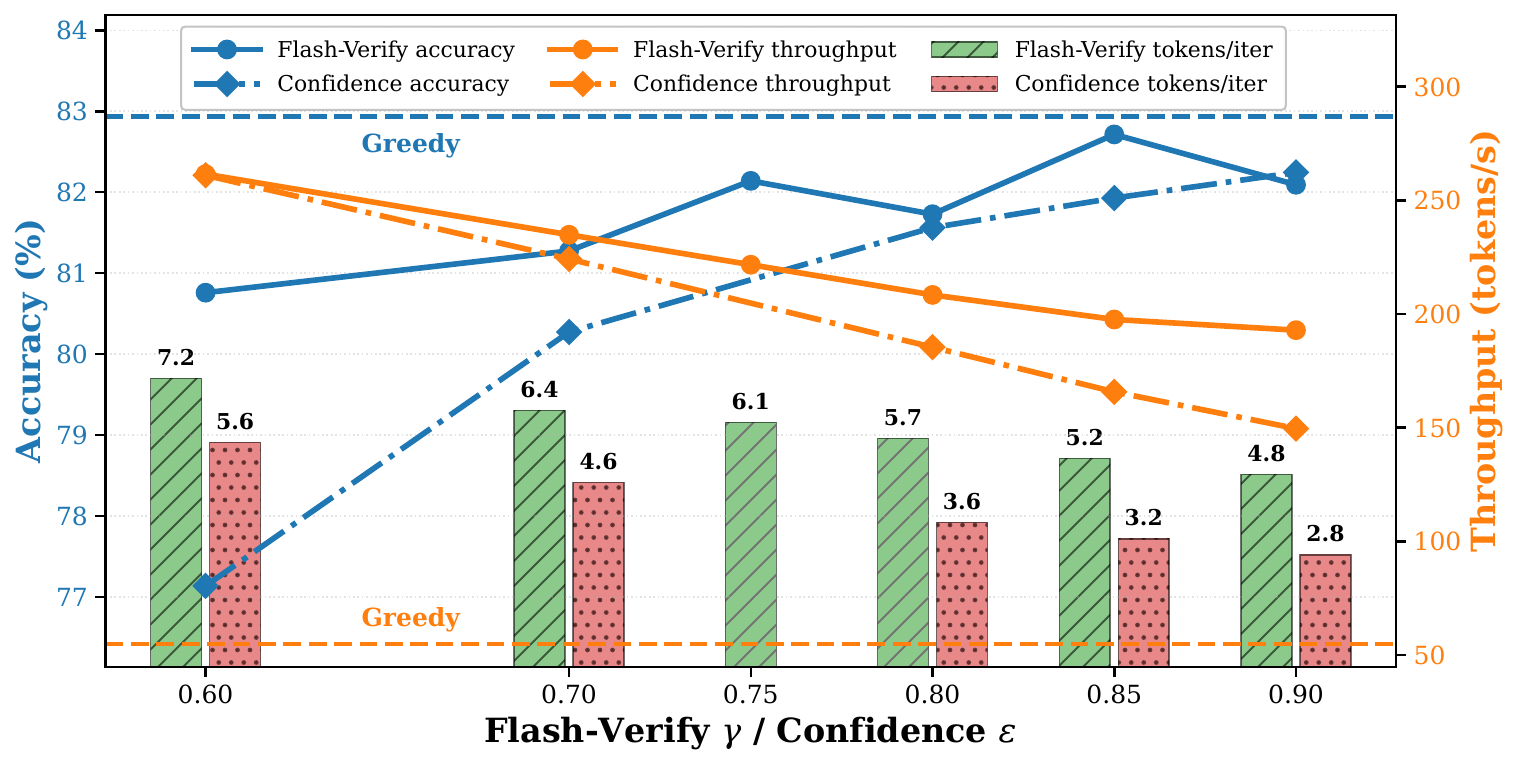}
  \caption{Tokens-per-iteration comparison of Flash-Verify and Confidence-aware decoding.}
  \label{fig:token per step}
\end{figure}

\noindent\textbf{Tokens decoded per iteration.}
Figure~\ref{fig:token per step} compares accuracy, throughput, and tokens decoded per iteration across Flash-Verify thresholds $\gamma$ and confidence-aware thresholds $\epsilon$. Across denoising settings, Flash-Verify generally decodes more tokens per iteration and achieves higher throughput, demonstrating greater parallel-decoding efficiency. Because Flash-Verify incurs an additional verification step, decoding $7.2$ tokens per iteration yields throughput comparable to confidence-aware decoding at $5.6$ tokens per iteration; however, Flash-Verify achieves $3.5\%$ higher accuracy. As the number of tokens decoded per iteration increases, the accuracy gap between the methods narrows, while Flash-Verify's throughput advantage grows to as much as $1.33\times$.

\begin{figure}[t]
  \centering
  \includegraphics[width=0.9\linewidth]{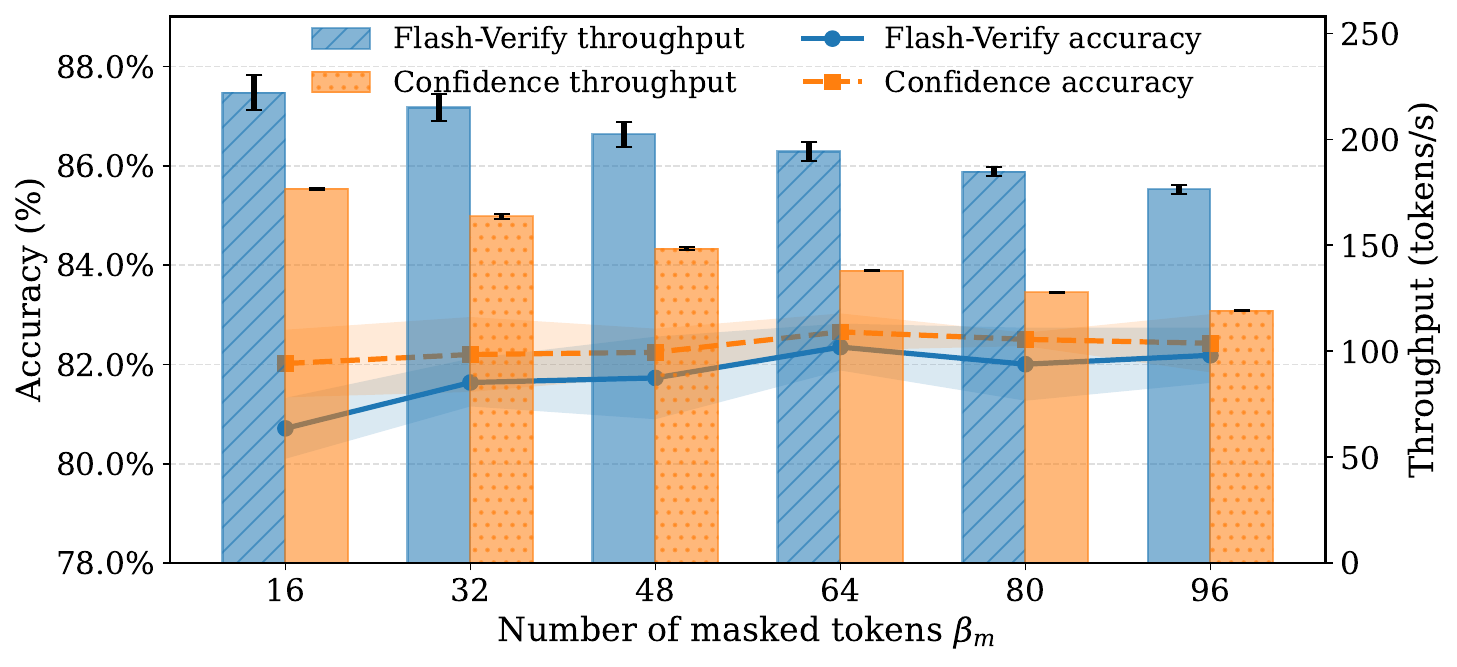}
  \caption{Impact of the masked-window size $\beta_m$ on accuracy and throughput}
  \label{fig:ablation mased window}
\end{figure}

\noindent\textbf{Masked-window size $\beta_m$.}
Figure~\ref{fig:ablation mased window} examines the effect of the masked-window size $\beta_m$ on the accuracy and throughput of Flash-Verify and confidence-aware decoding. For both methods, accuracy improves up to an intermediate window size, whereas throughput decreases as $\beta_m$ increases. Flash-Verify achieves comparable accuracy across all settings, with gaps of only $0.2$--$1.2$ percentage points. Moreover, it consistently delivers higher throughput, with its speedup over confidence-aware decoding increasing from approximately $1.4\times$ to $1.5\times$ as the window size grows.


\section{Related Work}
\noindent{\bf Diffusion Language Models and Acceleration.}
Masked diffusion models generate text by iteratively unmasking tokens predicted in parallel~\citep{li2025survey,austin2021structured,sahoo2024simple,shi2024simplified,zheng2024masked}. Scaling this paradigm has produced models competitive with autoregressive LLMs, including LLaDA~\citep{nie2025large}, Dream~\citep{ye2025dream}, and Gemini Diffusion~\citep{gemini_diffusion2025}. However, the lack of KV caching support makes dLLM inference slow: bidirectional attention and evolving hidden states across denoising steps prevent direct reuse of autoregressive caching strategies~\citep{pope2023efficiently}. Fast-dLLM~\citep{wu2025fast} introduces prefix caching with confidence-aware decoding. dKV-Cache~\citep{ma2025dkv} caches at fixed temporal intervals. Elastic-Cache~\citep{nguyen2025attention} makes updates adaptive via attention-pattern drift detection. FlashDLM~\citep{hu2025flashdlm} combines token-level change detection with an external autoregressive verifier. 

\noindent{\bf Parallel Decoding and Speculative Verification.}
Speculative decoding~\citep{leviathan2023fast,chen2023accelerating} accelerates autoregressive LLMs by drafting tokens with a fast model and verifying them in parallel with the target model. In diffusion LLMs, parallel decoding takes a different form: confidence-aware methods~\citep{wu2025fast} unmask all tokens above a threshold per step, Prophet~\citep{li2025diffusion} dynamically halts refinement when predictions stabilize, and FreeDave~\citep{wu2025free} drafts tokens in one forward pass then verifies in a second. FlashDLM~\citep{hu2025flashdlm} uses an external autoregressive LLM as the verifier. Our Flash-Verify differs from all of these: the dLLM serves as both drafter and verifier through the KV cache, with no external model and no training. We feed both the draft predictions and the original masks at the same positions under a causal attention mask, letting the model check its own consistency in a single additional forward pass. Tokens are accepted only when the draft and mask views agree, combining the throughput of aggressive parallel decoding with the accuracy of conservative single-token unmasking.

\section{Conclusion}

In this work, we proposed {\bf Flash-dLLM}, a training-free inference acceleration framework for diffusion Large Language Models. Flash-dLLM addresses the key efficiency bottlenecks in dLLM decoding by jointly handling IO-aware KV caching and parallel draft-and-verify decoding. By reducing redundant KV-cache read/write operations, prioritizing influential decoded tokens, and improving the confidence of early token commitments, Flash-dLLM effectively converts the inherent parallelism of dLLMs into practical inference speedup. Extensive experiments on mathematical reasoning and code generation demonstrate that Flash-dLLM achieves superior speed, memory efficiency, and scalability over existing acceleration methods, offering a promising and practical path toward efficient deployment of diffusion-based language models.

\section*{Acknowledgments}

This work is supported by the MBZUAI-WIS Joint Program for Artificial Intelligence Research.

\bibliography{references}
\bibliographystyle{plain}

\clearpage
\appendix

\section*{Appendix}
\renewcommand{\thesubsection}{\Alph{subsection}}

\noindent\textbf{Table of Contents}
\begin{itemize}[leftmargin=1.5em, itemsep=0pt, topsep=4pt]
    \item[\ref{app:limitations}] Limitations
    \item[\ref{app:broader-impact}] Broader Impact
    \item[\ref{app:flash-verify-theory}] Bounded-Deviation Guarantee for Flash-Verify
    \item[\ref{app:exp-setup}] Detailed Experiment Setup
    \item[\ref{app:stat-analysis}] Statistical analysis
    \item[\ref{app:ablation-generate-prefill}] Ablation of Generation and Prefill Length.
    \item[\ref{app:scalability}] More scalability analysis
    \item[\ref{app:sample}] Sample Response
\end{itemize}

\subsection{Limitations}
\label{app:limitations}

Flash-dLLM is evaluated on two representative masked diffusion LLMs across mathematical reasoning and code generation tasks. While the fused Triton kernel and Flash-Verify are architecture-agnostic in design, we have not yet validated them on continuous-space diffusion language models, where the cache update patterns may differ. Similarly, our benchmarks focus on structured-output tasks; the behavior of confidence-aware decoding on open-ended generation (e.g., long-form writing or dialogue), where token-level confidence distributions tend to be flatter, is an interesting direction for future work. The hyperparameters $\gamma$ and $\beta_m$ are fixed throughout generation; an adaptive scheme that adjusts these based on running confidence statistics could further improve the throughput-accuracy trade-off. We view these as natural extensions and a potential future research direction.

\subsection{Broader Impact}
\label{app:broader-impact}

This work targets inference-time efficiency for diffusion LLMs and does not introduce new training data, model architectures, or fine-tuning procedures. By reducing the computational and memory cost of dLLM decoding, Flash-dLLM lowers the hardware barrier to deploying these models, which could broaden access to non-autoregressive language generation for researchers and practitioners with limited GPU resources. Faster inference also reduces the energy consumption per generated sequence, contributing to more sustainable deployment of large language models. We do not modify safety filters or alignment mechanisms of the underlying models; the generation quality and any associated risks (e.g., producing harmful or biased content) are inherited from the base dLLM. We encourage users to apply standard content moderation and human oversight when deploying Flash-dLLM in user-facing applications, particularly in high-stakes domains.

\subsection{Bounded-Deviation Guarantee for Flash-Verify}
\label{app:flash-verify-theory}

\paragraph{Setup.}
We first fix the two distributions that Flash-Verify evaluates at a search
position, following the draft and verify passes of
Section~\ref{subsec:flash-draft-verify}, Figure~\ref{fig:overview} and
Algorithm~\ref{alg:flash-dllm}. Throughout, the quantity being controlled is
the gap between committing several positions at once and committing them from
the model's own chain-rule joint; this gap is the standard error term of
parallel unmasking in masked diffusion
models~\citep{wu2025fast,ben2025accelerated,ringel2026dependency}.

\emph{Draft pass.} At denoising step $t$, the draft pass runs the model on the
query set $\mathcal{Q}^t = \mathcal{T}^t \cup \mathcal{M}^t_{\beta_m}$ with every
position in $\mathcal{M}^t_{\beta_m}$ held at $[\mathrm{MASK}]$. Let $\mathbf{x}^t$
denote this input sequence. For each $i \in \mathcal{M}^t_{\beta_m}$ the draft
pass yields the draft-pass distribution over the vocabulary $V$, which we
call the \emph{draft marginal}
\begin{equation}
q_i(x) := p_\theta\!\big(x^i = x \mid \mathbf{x}^t\big), \qquad x \in V,
\end{equation}
from which the draft prediction and its confidence are
$\hat{x}^i = \arg\max_{x} q_i(x)$ and $c^i = q_i(\hat{x}^i)$. The window is
partitioned by confidence into the confident set
$\mathcal{D}^t = \{i : c^i \ge \epsilon\}$, which is accepted directly, and the
search set $\mathcal{S}^t = \mathcal{M}^t_{\beta_m} \setminus \mathcal{D}^t$,
which is sent to verification. Note that $q_i$ conditions on no other prediction
made at step $t$: every position in $\mathcal{M}^t_{\beta_m}$, including
$\mathcal{D}^t$ and the other search positions, is masked in $\mathbf{x}^t$.
Committing $\mathcal{D}^t$ directly is covered by the analysis of confidence-aware
decoding~\citep[Thm.~1]{wu2025fast}: if $n$ positions all have marginal
confidence above $1-\varepsilon$ with $(n+1)\varepsilon \le 1$, the joint mode
equals the product-of-marginals mode and the two distributions are within
$\tfrac{3n-1}{2}\varepsilon$ in total variation. That result says nothing about
$\mathcal{S}^t$, whose positions fail the confidence condition by construction,
and it is for those positions that the verify pass and the guarantee below are
needed.

\emph{Verify pass.} Let $\mathbf{x}^t_{+}$ denote $\mathbf{x}^t$ with the confident
positions committed to their drafts, $x^{\mathcal{D}^t} = \hat{x}^{\mathcal{D}^t}$;
together with the previously decoded tokens this forms the tracked context
$\mathcal{T}_v$. Write the search positions in causality order (draft
confidence, descending) as $i_1, \dots, i_K$ with $K = |\mathcal{S}^t|$. The
verify query duplicates $\mathcal{S}^t$ into a \emph{draft view}, filled with
$\hat{x}^{\mathcal{S}^t}$, and a \emph{mask view}, filled with $[\mathrm{MASK}]$.
Under the causal verify mask of Figure~\ref{fig:overview}, the mask view at
$i_k$ attends to $\mathcal{T}_v$, to the draft view at $i_1, \dots, i_{k-1}$ and
to the mask view at $i_k, \dots, i_K$; it does not attend to the draft view at
$i_k$ or at any later position. Every search position therefore enters its
context exactly once, as a draft if it precedes $i_k$ and as $[\mathrm{MASK}]$
otherwise, so the model output at the mask view of $i_k$ is the mask-view
distribution, which we call the \emph{verify conditional}
\begin{equation}
\label{eq:verify-conditional}
p_{i_k}(x) := p_\theta\!\big(x^{i_k} = x \mid \mathbf{x}^t_{+},\
\hat{x}^{i_1}, \dots, \hat{x}^{i_{k-1}}\big), \qquad x \in V,
\end{equation}
i.e.\ the model's prediction for $i_k$ when the positions before it in
causality order are committed to their drafts and $i_k, \dots, i_K$ remain
masked. This is exactly the input that sequential decoding in this order would
present at its $k$-th step, and it is the same object that speculative decoding
verifies against: the target model's conditional at a drafted position given
the accepted tokens before
it~\citep{leviathan2023fast,chen2023accelerating,yin2024theoretical}.
The mask-view prediction and confidence are
$\tilde{x}^{i} = \arg\max_{x} p_{i}(x)$ and $\tilde{c}^{i} = p_{i}(\tilde{x}^{i})$.
Flash-Verify accepts position $i$ according to
\begin{equation}
\mathrm{accept}(i)
= \mathbb{I}\!\big[\hat{x}^i = \tilde{x}^i\big]\cdot
  \mathbb{I}\!\big[\tilde{c}^i \ge \gamma\big],
\end{equation}
applied in causality order and stopped at the first rejection, so the
committed block is the longest prefix $\mathcal{A} = \{i_1, \dots, i_m\}$ on
which every position is accepted. An accepted token is simultaneously the mode
of the draft marginal $q_i$ and a $\gamma$-confident mode of the verify
conditional $p_i$. On acceptance the model commits the agreed token, and we
write the delivered distribution at position $i$ as the point mass
$\pi_i(x) := \mathbb{I}[x = \tilde{x}^i]$. For two distributions $u, v$ over a
finite set, total variation distance is
$\mathrm{TV}(u, v) := \tfrac{1}{2}\sum_{x} |u(x) - v(x)|$.

\begin{assumption}
\label{assump:reference}
The reference for the committed block is the model's own sequential
distribution in causality order: at position $i_k$ it is the verify conditional
$p_{i_k}$ of Eq.~\eqref{eq:verify-conditional}, and over the block it is the
chain-rule joint
$p_{\mathcal{A}}(x^{i_1}, \dots, x^{i_m}) := \prod_{k=1}^{m}
p_\theta\!\big(x^{i_k} \mid \mathbf{x}^t_{+}, x^{i_1}, \dots, x^{i_{k-1}}\big)$,
which is the distribution one-token-per-step decoding of the same positions in
the same order samples from. Deviation is measured against this model
distribution rather than the data distribution.
\end{assumption}

\begin{theorem}[Per-token deviation]
\label{thm:per-token}
Under Assumption~\ref{assump:reference}, for every $i \in \mathcal{A}$,
\begin{equation}
\mathrm{TV}(\pi_i, p_i) = 1 - \tilde{c}^i \le 1 - \gamma.
\end{equation}
\end{theorem}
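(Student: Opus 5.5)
The plan is a direct computation of the total variation distance between a point mass and a distribution, followed by the acceptance rule. First I will note what membership in $\mathcal{A}$ gives us. If $i \in \mathcal{A}$, the acceptance indicator at $i$ equals one, so $\hat{x}^i = \tilde{x}^i$ and $\tilde{c}^i \ge \gamma$. By definition $\tilde{c}^i = p_i(\tilde{x}^i)$, and the delivered distribution is $\pi_i = \mathbb{I}[\,\cdot = \tilde{x}^i]$.

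Under Assumption~\ref{assump:reference}, the reference at position $i = i_k$ is exactly the verify conditional $p_{i_k}$ of Eq.~\eqref{eq:verify-conditional}. So the statement reduces to a fact about any distribution $p$ on the finite vocabulary $V$ and a point mass $\delta_y$ at some $y \in V$: $\mathrm{TV}(\delta_y, p) = 1 - p(y)$. I will split the sum in the definition of $\mathrm{TV}$ into the term $x = y$ and the terms $x \neq y$:
\begin{equation*}
\tfrac{1}{2}\Big(|1 - p(y)| + \sum_{x \neq y} p(x)\Big) = \tfrac{1}{2}\big((1 - p(y)) + (1 - p(y))\big) = 1 - p(y).
\end{equation*}
This uses $0 \le p(y) \le 1$ and $\sum_{x \neq y} p(x) = 1 - p(y)$. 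Taking $y = \tilde{x}^i$ and $p = p_i$ gives the equality $\mathrm{TV}(\pi_i, p_i) = 1 - \tilde{c}^i$. The inequality $1 - \tilde{c}^i \le 1 - \gamma$ then follows from $\tilde{c}^i \ge \gamma$.

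There is no real technical obstacle. The only point needing care is justifying that the quantity the kernel actually outputs at the mask view of $i_k$ is $p_{i_k}$ as defined in Eq.~\eqref{eq:verify-conditional}. That requires the causal verify mask to expose exactly $\mathcal{T}_v$, the drafts at $i_1,\dots,i_{k-1}$, and masks at $i_k,\dots,i_K$. The setup paragraph establishes this, and I will take it as given. I will also remark that the draft-agreement condition $\hat{x}^i = \tilde{x}^i$ is not used for this per-token bound; only the $\gamma$-confidence condition is. Agreement matters only for identifying the committed token with the draft, which will be relevant for the block-level result.
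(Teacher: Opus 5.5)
Your proposal is correct and follows the paper's proof essentially line for line: you split the total variation sum at the point-mass location $\tilde{x}^i$ to obtain $1 - p_i(\tilde{x}^i) = 1 - \tilde{c}^i$, and then apply $\tilde{c}^i \ge \gamma$ from the acceptance rule. Your remark that the agreement condition is not needed for this per-token bound, and only matters for identifying the committed block with the drafts in the block corollary, also matches how the paper organizes the argument.
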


\begin{proof}
Let $m := \tilde{x}^i = \arg\max_x p_i(x)$, so that $p_i(m) = \tilde{c}^i$ and
$\pi_i(x) = \mathbb{I}[x = m]$. By the definition of total variation,
\begin{equation}
\mathrm{TV}(\pi_i, p_i)
= \tfrac{1}{2}\sum_{x \in V} \big|\pi_i(x) - p_i(x)\big|
= \tfrac{1}{2}\Big(\big|\pi_i(m) - p_i(m)\big|
  + \sum_{x \ne m} \big|\pi_i(x) - p_i(x)\big|\Big).
\end{equation}
At $x = m$ we have $\pi_i(m) = 1$, hence
$|\pi_i(m) - p_i(m)| = 1 - p_i(m)$. At every $x \ne m$ we have $\pi_i(x) = 0$,
hence $|\pi_i(x) - p_i(x)| = p_i(x)$, and since $p_i$ is a probability
distribution, $\sum_{x \ne m} p_i(x) = 1 - p_i(m)$. Combining,
\begin{equation}
\mathrm{TV}(\pi_i, p_i)
= \tfrac{1}{2}\big[(1 - p_i(m)) + (1 - p_i(m))\big]
= 1 - p_i(m)
= 1 - \tilde{c}^i.
\end{equation}
The acceptance rule fires only when $\tilde{c}^i \ge \gamma$, therefore
$\mathrm{TV}(\pi_i, p_i) = 1 - \tilde{c}^i \le 1 - \gamma$.
\end{proof}

\begin{corollary}[Block deviation]
\label{cor:block}
Let $\mathcal{A} = \{i_1, \dots, i_m\}$ be the block committed by a single
verify pass, $\pi_{\mathcal{A}} := \prod_{k=1}^{m} \pi_{i_k}$ the delivered
distribution over $\mathcal{A}$, and $p_{\mathcal{A}}$ the chain-rule joint of
Assumption~\ref{assump:reference}. Then
\begin{equation}
\mathrm{TV}(\pi_{\mathcal{A}}, p_{\mathcal{A}})
= 1 - \prod_{k=1}^{m} \tilde{c}^{\,i_k}
\le 1 - \gamma^{m}
\le m\,(1 - \gamma).
\end{equation}
\end{corollary}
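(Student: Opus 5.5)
The plan is to reuse the computation from Theorem~\ref{thm:per-token}, but on the product space $V^m$ instead of $V$. The delivered distribution $\pi_{\mathcal{A}} = \prod_{k} \pi_{i_k}$ is a point mass at the tuple $\tilde{\mathbf{x}} := (\tilde{x}^{i_1},\dots,\tilde{x}^{i_m})$. For any point mass $\delta_z$ and any distribution $v$ on a finite set, the argument in the proof of Theorem~\ref{thm:per-token} gives $\mathrm{TV}(\delta_z, v) = 1 - v(z)$; that argument never used that $z$ is the mode. So the first step is to record
\begin{equation}
\mathrm{TV}(\pi_{\mathcal{A}}, p_{\mathcal{A}}) = 1 - p_{\mathcal{A}}(\tilde{x}^{i_1},\dots,\tilde{x}^{i_m}),
\end{equation}
which reduces the corollary to evaluating the chain-rule joint at one tuple.

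The second step, which I expect to be the main point needing care, is to show that each chain-rule factor at $\tilde{\mathbf{x}}$ is exactly the verify conditional of Eq.~\eqref{eq:verify-conditional}. The $k$-th factor of $p_{\mathcal{A}}$ at $\tilde{\mathbf{x}}$ is
\begin{equation}
p_\theta\big(x^{i_k} = \tilde{x}^{i_k} \mid \mathbf{x}^t_{+}, \tilde{x}^{i_1},\dots,\tilde{x}^{i_{k-1}}\big).
\end{equation}
The verify conditional $p_{i_k}$, by contrast, conditions on the \emph{drafts} $\hat{x}^{i_1},\dots,\hat{x}^{i_{k-1}}$. These agree because $\mathcal{A}$ is an accepted prefix. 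Every $i_j$ with $j<k$ passed the test $\hat{x}^{i_j} = \tilde{x}^{i_j}$, so the conditioning contexts coincide. Hence the $k$-th factor equals $p_{i_k}(\tilde{x}^{i_k}) = \tilde{c}^{\,i_k}$.

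This is precisely where stopping at the first rejection matters. If a mismatched position could sit before $i_k$, the verify pass would have conditioned on the wrong token, and the identification would fail. I will also state explicitly that the positions $i_{k+1},\dots,i_K$ are masked in both objects, so the contexts match exactly and not just on the prefix. Multiplying the factors gives $p_{\mathcal{A}}(\tilde{\mathbf{x}}) = \prod_k \tilde{c}^{\,i_k}$, which is the claimed equality.

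The remaining inequalities are routine. Acceptance forces $\tilde{c}^{\,i_k} \ge \gamma$ for every $k$, so $\prod_k \tilde{c}^{\,i_k} \ge \gamma^m$, which gives $1 - \prod_k \tilde{c}^{\,i_k} \le 1 - \gamma^m$. Finally, since $\gamma \in [0,1]$, I will use the factorization
\begin{equation}
1 - \gamma^m = (1-\gamma)\sum_{j=0}^{m-1} \gamma^j \le m(1-\gamma),
\end{equation}
where the inequality holds because each $\gamma^j \le 1$. This is equivalent to Bernoulli's inequality, or to a union bound over the per-token deviations of Theorem~\ref{thm:per-token}.
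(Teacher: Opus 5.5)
Your proof is correct and follows the paper's route: reduce to $\mathrm{TV}(\delta_z,p)=1-p(z)$, identify each chain-rule factor with $\tilde{c}^{\,i_k}$ via agreement on the accepted prefix, then bound the product by $\gamma^m$ and the result by $m(1-\gamma)$. The differences are cosmetic. The paper places the point mass at the drafts $\hat{x}^{\mathcal{A}}$, whereas you evaluate at $\tilde{\mathbf{x}}$ and use $\hat{x}^{i_j}=\tilde{x}^{i_j}$ to align the conditioning contexts, and you prove the final step with the geometric-sum factorization rather than citing Bernoulli's inequality.
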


\begin{proof}
Every accepted token equals its draft, $\tilde{x}^{i_k} = \hat{x}^{i_k}$, so
$\pi_{\mathcal{A}}$ is the point mass at $\hat{x}^{\mathcal{A}} =
(\hat{x}^{i_1}, \dots, \hat{x}^{i_m})$. The computation in the proof of
Theorem~\ref{thm:per-token} applies verbatim to any point mass $\delta_z$ and any
distribution $p$ on a finite set and gives $\mathrm{TV}(\delta_z, p) = 1 - p(z)$;
hence $\mathrm{TV}(\pi_{\mathcal{A}}, p_{\mathcal{A}}) = 1 -
p_{\mathcal{A}}(\hat{x}^{\mathcal{A}})$. Evaluating the chain rule at
$\hat{x}^{\mathcal{A}}$, the $k$-th factor is
$p_\theta(\hat{x}^{i_k} \mid \mathbf{x}^t_{+}, \hat{x}^{i_1}, \dots, \hat{x}^{i_{k-1}})
= p_{i_k}(\hat{x}^{i_k}) = p_{i_k}(\tilde{x}^{i_k}) = \tilde{c}^{\,i_k}$ by
Eq.~\eqref{eq:verify-conditional}; the conditioning tokens $\hat{x}^{i_1}, \dots,
\hat{x}^{i_{k-1}}$ all lie in the accepted prefix, so no rejected draft enters
any factor. Thus $p_{\mathcal{A}}(\hat{x}^{\mathcal{A}}) = \prod_k \tilde{c}^{\,i_k}
\ge \gamma^{m}$, and $1 - \gamma^{m} \le m(1-\gamma)$ by Bernoulli's inequality.
\end{proof}

\begin{remark}[What the reference is]
Because the verify mask is causal in causality order, $p_{\mathcal{A}}$ is the
model's genuine joint over the block, not a product of marginals: the verified
tokens incur no parallel-unmasking dependence error, and the bound is an
equality in the mask-view confidences. The dependence term bounded in KL
by~\citep{ben2025accelerated} and in total variation
by~\citep{ringel2026dependency} therefore concerns only $\mathcal{D}^t$, which is
committed from marginals and handled by~\citep[Thm.~1]{wu2025fast}. Stopping
at the first rejection is what keeps $\mathcal{A}$ a prefix, so that every
conditional in $p_{\mathcal{A}}$ conditions only on accepted tokens.
\end{remark}

\begin{remark}[Why $\gamma$ gates the mask view]
The bound is controlled by $\gamma$ because acceptance thresholds the
confidence of $p_i$, the distribution the token is actually committed from.
The confidence threshold $\epsilon$ instead gates $c^i = q_i(\hat{x}^i)$,
computed with all of $\mathcal{M}^t_{\beta_m}$ masked~\citep{wu2025fast};
$q_i$ ignores the other positions decoded in the same step, so $\hat{x}^i$ can
be its mode yet carry little mass under $p_i$, and for a search token
$c^i < \epsilon$ by construction. Hence lowering $\epsilon$ is not equivalent to
verifying: only the latter carries the $1-\gamma$ guarantee against $p_i$. In
the language of speculative decoding, the rule is a deterministic, biased
acceptance~\citep{yin2024theoretical}: the agreement test alone is lossless
with respect to greedy decoding from $p_i$, as in the auto-speculative verifier
of~\citep{agrawal2025structuring}, while $\gamma$ trades a bounded distribution
bias of $1-\tilde{c}^i$ for more accepted tokens, the trade-off swept in
Figure~\ref{fig:pareto flash}.
\end{remark}

\subsection{Detailed Experiment Setup}
\label{app:exp-setup}
\noindent{\bf Implementation Details.} All experiments run on a single NVIDIA A100 80GB GPU. We evaluate \textbf{Flash-dLLM} on LLaDA-1.5~\citep{zhu2025llada} across GSM8K~\citep{cobbe2021training}, MATH~\citep{hendrycks2021measuring}, HumanEval~\citep{chen2021evaluating}, and MBPP~\citep{austin2021program}. We implement the fused KV-cache kernel in Triton 2.0. Default benchmark is GSM8K, with default hyperparameters: confidence threshold $\epsilon=0.9$, verify threshold $\gamma=0.8$, block size $\beta=16$, tracked budget $\beta_t=80$, sliding window size $\beta_m=64$, generation length 512. For fair comparison, we re-run all baselines under identical hardware and software configurations. \noindent{\bf Baselines.} We compare against three approaches: (1)~\textbf{No Cache}: standard dLLM inference without KV caching, under both greedy (fixed-step) and confidence-aware decoding; (2)~\textbf{Fast-dLLM}~\citep{wu2025fast}: prefix-caching with confidence-aware decoding; (3)~\textbf{Elastic-Cache}~\citep{nguyen2025attention}: adaptive KV caching with attention-pattern-based cache reuse. We report Flash-dLLM results under three configurations: greedy decoding (pure KV-cache speedup), confidence-aware decoding (KV-cache + parallel decoding), and Flash-Verify (KV-cache + draft-and-verify parallel decoding). \noindent{\bf Evaluation Metrics.} We use \texttt{lm-eval-harness}~\citep{eval-harness}. Throughput is measured as decoding tokens/sec averaged over the benchmark, following Fast-dLLM's protocol~\citep{wu2025fast}. Accuracy metrics: GSM8K uses 5-shot \texttt{flexible\_extract}; MATH uses 4-shot \texttt{math\_verify}; HumanEval uses 0-shot \texttt{pass@1} with Fast-dLLM post-processing; MBPP uses 3-shot \texttt{pass@1}. We test at generation lengths 256 and 512 to study scalability.

\noindent{\bf Evaluation Framework and Metrics.}
Our evaluation protocol comprehensively assesses both inference efficiency and model performance across various tasks. To ensure standardization and reproducibility, we conduct all task-specific evaluations using the \texttt{lm-eval-harness} library~\citep{eval-harness}. Inference speed is measured by throughput in tokens per second (t/s), calculated as the average number of tokens generated by the model over the entire sequence until it produces an end-of-sequence (\textbf{\texttt{<eos>}}) token. We maintain consistency with Fast-dLLM~\citep{wu2025fast} in our calculation methodology to ensure comparable speed benchmarks. Task-specific performance is evaluated using established metrics suitable for each benchmark. For GSM8K~\citep{cobbe2021training}, we report 5-shot \texttt{flexible\_extract} exact match accuracy. For the MATH dataset~\citep{hendrycks2021measuring}, we report the 4-shot \texttt{math\_verify} score using the \texttt{minerva\_math} variant. For HumanEval~\citep{chen2021evaluating}, we evaluate 0-shot accuracy using a post-processing script consistent with the Fast-dLLM implementation to ensure fair comparison. Finally, for MBPP~\citep{austin2021program}, we report the 3-shot \texttt{pass@1} metric.

\noindent{\bf Hyper-parameters:} The hyper-parameters used for Flash-dLLM are presented in Table \ref{tab:hyperparameters}. Specifically, 
\begin{itemize}
    \item We set the basic block size to $\beta=16$, corresponding to the smallest block used in our Triton implementation.
    \item For confidence-aware decoding, we use $\epsilon=0.9$, following the optimal setting reported by Fast-dLLM.
    \item We use a default masked-window size of $\beta_m=64$ across all settings to balance accuracy and decoding speed.
    \item We set $\gamma=0.8$ for GSM8K and MBPP and $\gamma=0.85$ for MATH and HumanEval.
    \item We use a batch size of $32$ in all experiments to maximize GPU utilization.
\end{itemize}

\begin{table}[t]
    \centering
    \caption{The hyper-parameters of Flash-dLLM under various settings.}
    \label{tab:hyperparameters}
    \resizebox{0.99\linewidth}{!}{
    \begin{tabular}{llccccc}
        \toprule
        \textbf{Model} & \textbf{Benchmark} & \textbf{Gen Length} & Tracking budget $\beta_t$ & Window size $\beta_m$ & Flash-Verify $\gamma$ & Batch size \\
        \midrule
        \multirow{9}{*}{\textbf{LLaDA-1.5}} & \multirow{2}{*}{GSM8K (5-shot)} & 256 & 64 & 64 & 0.8 & 32 \\
         & & 512 & 64 & 64 & 0.8 & 32\\
        \cmidrule(l){2-7}
         & \multirow{2}{*}{MATH (4-shot)} & 256 & 64 & 64 & 0.85 & 32 \\
         & & 512 & 64 & 64 & 0.85 & 32 \\
        \cmidrule(l){2-7}
         & \multirow{2}{*}{Humaneval (0-shot)} & 256 & 64 & 64 & 0.85 & 32 \\
         & & 512 & 64 & 64 & 0.85 & 32 \\
        \cmidrule(l){2-7}
         & \multirow{2}{*}{MBPP (3-shot)} & 256 & 48 & 64 & 0.8 & 32 \\
         & & 512 & 48 & 64 & 0.8 & 32 \\
        
        \bottomrule
    \end{tabular}
    }
\end{table}

\subsection{Statistical analysis}
\label{app:stat-analysis}

\newcommand{\resultcell}[4]{%
  \makecell[c]{%
    $#1$\,{\footnotesize $\pm\,#2$}\\[-1pt]
    \textcolor{blue}{$#3$\,{\footnotesize $\pm\,#4$}}%
  }%
}

\begin{table}[t]
  \centering
  \caption{
    Mean accuracy (\%) and throughput over five random seeds.
    Accuracy is shown on the first line and throughput is shown
    in blue on the second line. Values are mean $\pm$ standard deviation.
  }
  \label{tab:gamma-tracknum-results}

  \setlength{\tabcolsep}{7pt}
  \renewcommand{\arraystretch}{1.15}

  \resizebox{0.6\linewidth}{!}{%
    \begin{tabular}{c c c c c}
      \toprule
      & \multicolumn{4}{c}{Track budget $\beta_t$} \\
      \cmidrule(lr){2-5}
      $\gamma$
        & $48$
        & $64$
        & $80$
        & $96$ \\
      \midrule

      $0.60$
        & \resultcell{80.23}{0.46}{277.80}{1.18}
        & \resultcell{80.76}{0.21}{261.54}{1.00}
        & \resultcell{81.52}{0.50}{248.80}{0.70}
        & \resultcell{81.42}{0.41}{235.82}{0.76}
        \\

      $0.70$
        & \resultcell{80.74}{0.66}{252.20}{2.52}
        & \resultcell{81.27}{0.65}{234.86}{4.94}
        & \resultcell{82.00}{0.36}{226.66}{3.28}
        & \resultcell{82.35}{0.13}{214.78}{2.18}
        \\

      $0.75$
        & \resultcell{80.94}{0.81}{235.60}{1.36}
        & \resultcell{82.14}{0.47}{221.68}{0.71}
        & \resultcell{82.52}{0.45}{209.76}{1.29}
        & \resultcell{82.56}{0.91}{208.08}{0.51}
        \\

      $0.80$
        & \resultcell{81.64}{0.54}{221.06}{3.58}
        & \resultcell{81.73}{0.74}{208.40}{2.64}
        & \resultcell{82.15}{0.47}{197.08}{1.54}
        & \resultcell{82.47}{0.47}{196.98}{0.84}
        \\

      $0.85$
        & \resultcell{81.46}{1.03}{211.32}{0.69}
        & \resultcell{82.71}{0.51}{197.60}{0.70}
        & \resultcell{82.76}{0.84}{187.28}{0.75}
        & \resultcell{83.15}{0.17}{185.70}{0.61}
        \\

      $0.90$
        & \resultcell{81.55}{0.49}{206.22}{0.50}
        & \resultcell{82.09}{0.53}{192.90}{0.75}
        & \resultcell{82.64}{0.41}{182.16}{0.68}
        & \resultcell{82.75}{0.62}{171.86}{0.33}
        \\

      $1.00$
        & \resultcell{81.79}{0.64}{165.14}{0.83}
        & \resultcell{82.24}{0.43}{149.60}{0.47}
        & \resultcell{82.96}{0.51}{138.28}{0.37}
        & \resultcell{83.41}{0.50}{131.58}{1.41}
        \\

      \bottomrule
    \end{tabular}%
  }
\end{table}

Table ~\ref{tab:gamma-tracknum-results} examines the joint effect of the denoising parameter $\gamma$ and track budget $\beta_t$ on accuracy and throughput. The results show a consistent accuracy--efficiency trade-off: increasing either parameter generally improves accuracy while reducing decoding throughput. At a fixed $\gamma$, increasing $\beta_t$ from $48$ to $96$ improves accuracy by approximately $0.83$--$1.69$ percentage points, but decreases throughput by about $10.9\%$--$20.3\%$. Similarly, increasing $\gamma$ from $0.60$ to $1.00$ yields an accuracy gain of $1.44$--$1.99$ percentage points across the evaluated track budgets, accompanied by a throughput reduction of approximately $40.6\%$--$44.4\%$. Although a few neighboring configurations exhibit small non-monotonic variations, these differences are generally comparable to the reported standard deviations and do not alter the overall trend.

The highest-throughput configuration is obtained with $\gamma=0.60$ and $\beta_t=48$, reaching $277.80$ tokens/s at $80.23\%$ accuracy. In contrast, the highest accuracy of $83.41\%$ is achieved with $\gamma=1.00$ and $\beta_t=96$, where throughput decreases to $131.58$ tokens/s. Thus, moving from the fastest to the most accurate configuration improves accuracy by $3.18$ percentage points while reducing throughput by approximately $52.6\%$. Intermediate settings provide more balanced operating points. In particular, $\gamma=0.85$ and $\beta_t=96$ achieve $83.15\%$ accuracy at $185.70$ tokens/s, remaining only $0.26$ percentage points below the maximum accuracy while providing approximately $41.1\%$ higher throughput. The relatively small standard deviations across five random seeds---at most $1.03$ percentage points for accuracy and $4.94$ tokens/s for throughput---also indicate that the observed trade-off is consistent across runs. Overall, $\gamma$ and $\beta_t$ offer complementary controls for selecting either a throughput-oriented or accuracy-oriented operating point.

\subsection{Ablation of Generation and Prefill Length.}
\label{app:ablation-generate-prefill}
\begin{table*}[t]
\centering
\caption{Ablation of generation and prefill lengths.}
\label{tab:ablation length and prefill}
\scriptsize
\setlength{\tabcolsep}{2.5pt}
\renewcommand{\arraystretch}{1.05}

\begin{subtable}[t]{0.49\textwidth}
\centering
\caption{Impact of generation length on LLaDA-1.5 for GSM8K (5-shot) with a batch size of 16.}
\label{tab:ablation length}
\resizebox{\linewidth}{!}{%
\begin{tabular}{c|c|cccc}
\toprule
\bfseries Approach &
\bfseries Metric &
\bfseries Len. 128 &
\bfseries Len. 256 &
\bfseries Len. 512 &
\bfseries Len. 1024 \\
\midrule

\multirow{3}{*}{\makecell{Flash-Cache\\+ Confidence}}
& Accuracy
& \makecell{79.03}
& \makecell{82.36}
& \makecell{82.24}
& \makecell{82.29} \\

& Throughput
& \makecell{\textcolor{blue}{126.7}
  (\textcolor{orange}{1.00$\times$})}
& \makecell{\textcolor{blue}{140.4}
  (\textcolor{orange}{1.00$\times$})}
& \makecell{\textcolor{blue}{131.8}
  (\textcolor{orange}{1.00$\times$})}
& \makecell{\textcolor{blue}{125.7}
  (\textcolor{orange}{1.00$\times$})} \\

& Tokens/step
& \makecell{\textcolor{red}{2.5}}
& \makecell{\textcolor{red}{2.8}}
& \makecell{\textcolor{red}{2.8}}
& \makecell{\textcolor{red}{2.9}} \\

\midrule

\multirow{3}{*}{\makecell{Flash-Cache\\+ Flash-Verify}}
& Accuracy
& \makecell{79.55}
& \makecell{81.90}
& \makecell{81.73}
& \makecell{81.72} \\

& Throughput
& \makecell{\textcolor{blue}{174.5}
  (\textcolor{orange}{1.38$\times$})}
& \makecell{\textcolor{blue}{198.4}
  (\textcolor{orange}{1.41$\times$})}
& \makecell{\textcolor{blue}{186.2}
  (\textcolor{orange}{1.41$\times$})}
& \makecell{\textcolor{blue}{173.0}
  (\textcolor{orange}{1.38$\times$})} \\

& Tokens/step
& \makecell{\textcolor{red}{5.6}}
& \makecell{\textcolor{red}{5.6}}
& \makecell{\textcolor{red}{5.6}}
& \makecell{\textcolor{red}{5.6}} \\

\bottomrule
\end{tabular}%
}
\end{subtable}
\hfill
\begin{subtable}[t]{0.49\textwidth}
\centering
\caption{Impact of prefill length on LLaDA-1.5 for GSM8K-512 with a batch size of 16.}
\label{tab:ablation prefill}
\resizebox{\linewidth}{!}{%
\begin{tabular}{c|c|cccc}
\toprule
\bfseries Approach &
\bfseries Metric &
\bfseries 1-shot &
\bfseries 3-shot &
\bfseries 5-shot &
\bfseries 8-shot \\
\midrule

\multirow{3}{*}{\makecell{Flash-Cache\\+ Confidence}}
& Accuracy
& \makecell{80.00}
& \makecell{82.27}
& \makecell{82.24}
& \makecell{82.25} \\

& Throughput
& \makecell{\textcolor{blue}{155.8}
  (\textcolor{orange}{1.00$\times$})}
& \makecell{\textcolor{blue}{145.4}
  (\textcolor{orange}{1.00$\times$})}
& \makecell{\textcolor{blue}{131.8}
  (\textcolor{orange}{1.00$\times$})}
& \makecell{\textcolor{blue}{123.1}
  (\textcolor{orange}{1.00$\times$})} \\

& Tokens/step
& \makecell{\textcolor{red}{2.8}}
& \makecell{\textcolor{red}{2.8}}
& \makecell{\textcolor{red}{2.8}}
& \makecell{\textcolor{red}{2.8}} \\

\midrule

\multirow{3}{*}{\makecell{Flash-Cache\\+ Flash-Verify}}
& Accuracy
& \makecell{80.12}
& \makecell{81.73}
& \makecell{81.73}
& \makecell{81.73} \\

& Throughput
& \makecell{\textcolor{blue}{234.4}
  (\textcolor{orange}{1.50$\times$})}
& \makecell{\textcolor{blue}{209.6}
  (\textcolor{orange}{1.44$\times$})}
& \makecell{\textcolor{blue}{186.2}
  (\textcolor{orange}{1.41$\times$})}
& \makecell{\textcolor{blue}{165.7}
  (\textcolor{orange}{1.35$\times$})} \\

& Tokens/step
& \makecell{\textcolor{red}{5.8}}
& \makecell{\textcolor{red}{5.7}}
& \makecell{\textcolor{red}{5.7}}
& \makecell{\textcolor{red}{5.6}} \\

\bottomrule
\end{tabular}%
}
\end{subtable}

\end{table*}

Table~\ref{tab:ablation length and prefill} examines the sensitivity of the two decoding strategies to generation length and prefill length. As shown in Table~\ref{tab:ablation length}, Flash-Cache with Flash-Verify consistently provides higher throughput than confidence-aware decoding across all generation lengths. Its throughput advantage ranges from $1.38\times$ to $1.41\times$, corresponding to absolute improvements of $47.3$--$58.0$ tokens/s. Both methods reach their highest throughput at a generation length of 256, attaining $140.4$ tokens/s for confidence-aware decoding and $198.4$ tokens/s for Flash-Verify. Beyond 256 tokens, throughput gradually decreases, while accuracy remains nearly unchanged. For example, increasing the generation length from 256 to 1024 reduces throughput by approximately $10.5\%$ for confidence-aware decoding and $12.8\%$ for Flash-Verify, without providing a corresponding accuracy improvement.

Table~\ref{tab:ablation prefill} shows that increasing the number of in-context examples, and consequently the prefill length, reduces throughput for both methods. From 1-shot to 8-shot prompting, confidence-aware throughput decreases from $155.8$ to $123.1$ tokens/s, a reduction of approximately $21.0\%$. Flash-Verify decreases from $234.4$ to $165.7$ tokens/s, corresponding to a reduction of approximately $29.3\%$. Nevertheless, Flash-Verify remains faster in every setting, providing speedups of $1.50\times$, $1.44\times$, $1.41\times$, and $1.35\times$ for the 1-, 3-, 5-, and 8-shot settings, respectively. The decreasing relative speedup suggests that the computational cost of processing a longer prefix affects Flash-Verify more strongly, although it retains an absolute advantage of $42.6$--$78.6$ tokens/s.

\subsection{More scalability analysis}
\label{app:scalability}

\begin{table}[t]
\centering
\caption{Effect of batch size on throughput. Using LLaDA-1.5 for GSM8K-512 (5-shot). Speedup is measured relative to Flash-Cache with greedy decoding at the same batch size.}
\label{tab:ablation-bs-speed}
\resizebox{0.99\linewidth}{!}{%
\begin{tabular}{c|cc|ccccccc}
\toprule
\multirow{2}{*}{\textbf{Approach}}
& \multirow{2}{*}{\textbf{ACC}}
& \multirow{2}{*}{\textbf{Tok./step}}
& \multicolumn{7}{c}{\textbf{Throughput}} \\
\cmidrule(lr){4-10}
& &
& \textbf{B=1}
& \textbf{B=2}
& \textbf{B=4}
& \textbf{B=8}
& \textbf{B=16}
& \textbf{B=24}
& \textbf{B=32} \\
\midrule

\makecell{\bfseries Flash-Cache\\\bfseries + Greedy}
& \makecell{82.94}
& \makecell{\textcolor{red}{1.0}}
& \makecell{{\footnotesize \textcolor{blue}{18.5}
  (\textcolor{orange}{1.0$\times$})}}
& \makecell{{\footnotesize \textcolor{blue}{28.9}
  (\textcolor{orange}{1.0$\times$})}}
& \makecell{{\footnotesize \textcolor{blue}{37.7}
  (\textcolor{orange}{1.0$\times$})}}
& \makecell{{\footnotesize \textcolor{blue}{46.3}
  (\textcolor{orange}{1.0$\times$})}}
& \makecell{{\footnotesize \textcolor{blue}{51.3}
  (\textcolor{orange}{1.0$\times$})}}
& \makecell{{\footnotesize \textcolor{blue}{53.6}
  (\textcolor{orange}{1.0$\times$})}}
& \makecell{{\footnotesize \textcolor{blue}{55.0}
  (\textcolor{orange}{1.0$\times$})}}
\\

\midrule

\makecell{\bfseries Flash-Cache\\\bfseries + Confidence}
& \makecell{82.87}
& \makecell{\textcolor{red}{2.8}}
& \makecell{{\footnotesize \textcolor{blue}{51.7}
  (\textcolor{orange}{2.8$\times$})}}
& \makecell{{\footnotesize \textcolor{blue}{78.0}
  (\textcolor{orange}{2.7$\times$})}}
& \makecell{{\footnotesize \textcolor{blue}{102.4}
  (\textcolor{orange}{2.7$\times$})}}
& \makecell{{\footnotesize \textcolor{blue}{120.5}
  (\textcolor{orange}{2.6$\times$})}}
& \makecell{{\footnotesize \textcolor{blue}{131.8}
  (\textcolor{orange}{2.6$\times$})}}
& \makecell{{\footnotesize \textcolor{blue}{136.4}
  (\textcolor{orange}{2.5$\times$})}}
& \makecell{{\footnotesize \textcolor{blue}{139.5}
  (\textcolor{orange}{2.5$\times$})}}
\\

\midrule

\makecell{\bfseries Flash-Cache\\\bfseries + Flash-Verify}
& \makecell{\textbf{83.02}}
& \makecell{\textcolor{red}{5.7}}
& \makecell{{\footnotesize \textcolor{blue}{56.0}
  (\textcolor{orange}{3.0$\times$})}}
& \makecell{{\footnotesize \textcolor{blue}{90.9}
  (\textcolor{orange}{3.1$\times$})}}
& \makecell{{\footnotesize \textcolor{blue}{131.3}
  (\textcolor{orange}{3.5$\times$})}}
& \makecell{{\footnotesize \textcolor{blue}{164.8}
  (\textcolor{orange}{3.6$\times$})}}
& \makecell{{\footnotesize \textcolor{blue}{186.2}
  (\textcolor{orange}{3.6$\times$})}}
& \makecell{{\footnotesize \textcolor{blue}{195.5}
  (\textcolor{orange}{3.6$\times$})}}
& \makecell{{\footnotesize \textcolor{blue}{199.8}
  (\textcolor{orange}{3.6$\times$})}}
\\

\bottomrule
\end{tabular}%
}
\end{table}

\noindent\textbf{Analysis.}
Table~\ref{tab:ablation-bs-speed} evaluates the throughput scaling of the three decoding strategies as the batch size increases. Throughput improves monotonically for all methods, although the gains gradually diminish at larger batch sizes. From $B=1$ to $B=32$, greedy decoding increases from $18.5$ to $55.0$ tokens/s, corresponding to a $2.97\times$ increase due to batch scaling. Confidence-aware decoding rises from $51.7$ to $139.5$ tokens/s, or $2.70\times$, while Flash-Verify increases from $56.0$ to $199.8$ tokens/s, corresponding to the largest batch-scaling improvement of $3.57\times$.

Flash-Verify achieves the highest throughput at every batch size. Relative to greedy decoding under the same batch configuration, its speedup increases from $3.0\times$ at $B=1$ to approximately $3.6\times$ for $B\geq8$. Confidence-aware decoding provides a smaller speedup of approximately $2.5\times$--$2.8\times$. The advantage of Flash-Verify over confidence-aware decoding also grows with batch size: it is only $8.3\%$ faster at $B=1$, but becomes approximately $43.2\%$ faster at $B=32$. This trend indicates that Flash-Verify makes more effective use of the additional parallel computation available at larger batch sizes.

The throughput improvements are consistent with the number of tokens decoded per step. Flash-Verify accepts $5.7$ tokens per step, compared with $2.8$ for confidence-aware decoding and $1.0$ for greedy decoding. Throughput begins to saturate beyond $B=16$: at this batch size, Flash-Verify already reaches $186.2$ tokens/s, or approximately $93.2\%$ of its throughput at $B=32$. Therefore, $B=16$ provides a favorable efficiency point, capturing most of the maximum throughput while requiring only half the batch size.

\subsection{Sample Response}
\label{app:sample}

In the following, we present several examples of actual generation results produced by our approach under different parameter settings.

\begin{table}[h]
\centering
\caption{Example LLaDA-1.5 (without cache) — A qualitative comparison of different decoding methods.}
\vspace{0.3cm}

\noindent\rule{\textwidth}{0.4pt}
\textbf{Prompt:} \textit{Amy is taking a history test. She correctly answers 80\% of the multiple-choice questions, 90\% of the true/false questions, and 60\% of the long-answer questions. The multiple-choice and true/false questions are worth 1 point each, and the long answer questions are worth 5 points each. How many points does Amy score if there are 10 multiple-choice questions, 20 true/false questions, and 5 long answer questions?}
\noindent\rule{\textwidth}{0.4pt}

\vspace{0.5cm}
\resizebox{0.99\linewidth}{!}{ 
\begin{tabular}{|>{\columncolor{lightblue}}p{5cm}|>{\columncolor{lightgreen}}p{5cm}|>{\columncolor{lightpink}}p{5cm}|}
\hline
\rowcolor{headercolor}
\textcolor{white}{\textbf{Greedy}} & \textcolor{white}{\textbf{Confident-aware}} & \textcolor{white}{\textbf{Flash-Verify}} \\
\hline
\footnotesize

First find the number of multiple-choice questions Amy answers correctly: 80\% * 10 questions = <<10*.80=8>>8 questions
Then find the number of true/false questions Amy answers correctly: 90\% * 20 questions = <<20*.90=18>>18 questions
Then find the number of long-answer questions Amy answers correctly: 60\% * 5 questions = <<50*.60=3>>3 questions
Then find the total points Amy gets from multiple-choice questions: 8 questions * 1 point/question = <<8*1=8>>8 points
Then find the total points Amy gets from true/false questions: 18 questions * 1 point/question = <<18*1=18>>18 points
Then find the total points Amy gets from long-answer questions: 3 questions * 5 points/question = <<3*5=15>>15 points
Then add up the points from each type of question to find her total score: 8 points + 18 points + 15 points = <<8+18+15=41>>41 points

$\boxed{41}$

\vspace{0.2cm}
\textcolor{red}{\textbf{Steps: 283, Tokens: 283, Time:64.7s}}
&
\footnotesize

First find the number of multiple-choice questions Amy answers correctly: 80\% * 10 questions = <<80*.10=8>>8 questions
Then find the number of true/false questions Amy answers correctly: 90\% * 20 questions = <<90*.20=18>>18 questions
Then find the number of long-answer questions Amy answers correctly: 60\% * 5 questions = <<60*.05=3>>3 questions
Then find the total points Amy gets from multiple-choice questions: 8 questions * 1 point/question = <<8*1=8>>8 points
Then find the total points Amy gets from true/false questions: 18 questions * 1 point/question = <<18*1=18>>18 points
Then find the total points Amy gets from long-answer questions: 3 questions * 5 points/question = <<3*5=15>>15 points
Then add up the points from each type of question to find her total score: 8 points + 18 points + 15 points = <<8+18+15=41>>41 points

$\boxed{41}$

\vspace{0.2cm}
\textcolor{red}{\textbf{Steps: 66, Tokens: 277, Time: 17.2s}}
&
\footnotesize

First find the number of multiple-choice questions Amy answers correctly: 10 questions * 80\% = <<10*0.8=8>>8 questions
Then find the number of true/false questions Amy answers correctly: 20 questions * 90\% = <<20*0.9=18>>18 questions
Then find the number of long-answer questions Amy answers correctly: 5 questions * 60\% = <<5*0.6=3>>3 questions
Then find the total points from the multiple-choice questions: 8 questions * 1 point/question = <<8*1=8>>8 points
Then find the total points from the true/false questions: 18 questions * 1 point/question = <<18*1=18>>18 points
Then find the total points from the long-answer questions: 3 questions * 5 points/question = <<3*5=15>>15 points
Then add up the points from each type of question to find the total score: 8 points + 18 points + 15 points = <<8+18+15=41>>41 points

$\boxed{41}$

\vspace{0.2cm}
\textcolor{red}{\textbf{Steps: 30, Tokens: 276, Time: 13.1s}}
\\
\hline
\end{tabular}}
\end{table}

\begin{table}[h]
\centering
\caption{Example Flash-dLLM: A qualitative comparison of different decoding methods, LLaDA-1.5}
\vspace{0.3cm}

\noindent\rule{\textwidth}{0.4pt}
\textbf{Prompt:} \textit{Rong has been saving 20 coins in his piggy bank every month. Neil has been saving 2/5 times more coins in his piggy bank per month than Rong. How many coins are they having ten years after they started their savings?}
\noindent\rule{\textwidth}{0.4pt}

\vspace{0.5cm}
\resizebox{0.99\linewidth}{!}{ 
\begin{tabular}{|>{\columncolor{lightblue}}p{5cm}|>{\columncolor{lightgreen}}p{5cm}|>{\columncolor{lightpink}}p{5cm}|}
\hline
\rowcolor{headercolor}
\textcolor{white}{\textbf{Greedy}} & \textcolor{white}{\textbf{Confident-aware}} & \textcolor{white}{\textbf{Flash-Verify}} \\
\hline
\footnotesize

Rong saves 20 coins per month, so in one year, he saves 20 * 12 = <<20*12=240>>240 coins.
Neil saves 2/5 times more coins than Rong, so he saves 20 + (2/5) * 20 = 20 + 8 = 28 coins per month.
In one year, Neil saves 28 * 12 = <<28*12=336>>336 coins.
In ten years, Neil saves 336 * 10 = <<336*10=3360>>3360 coins.
Together, Rong and Neil have 2400 + 3360 = <<2400+3360=5760>>5760 coins.

$\boxed{5760}$

\vspace{0.2cm}
\textcolor{red}{\textbf{Steps: 224, Tokens: 224, Time:15.2s}}
&
\footnotesize

Rong saves 20 coins per month, so in one year, he saves 20 * 12 = <<20*12=240>>240 coins.
Neil saves 2/5 times more coins than Rong, so he saves 20 + (2/5) * 20 = 20 + 8 = 28 coins per month.
In one year, Neil saves 28 * 12 = <<28*12=336>>336 coins.
In ten years, Neil saves 336 * 10 = <<336*10=3360>>3360 coins.
Together, Rong and Neil have saved 240 + 3360 = <<240+3360=3600>>3600 coins.

$\boxed{3600}$

\vspace{0.2cm}
\textcolor{red}{\textbf{Steps: 73, Tokens: 222, Time: 7.8s}}
&
\footnotesize

Rong saves 20 coins per month, so in one year, he saves 20 * 12 = <<20*12=240>>240 coins.
Neil saves 2/5 times more coins than Rong, so he saves 20 + (2/5 * 20) = 20 + 8 = 28 coins per month.
In one year, Neil saves 28 * 12 = <<28*12=336>>336 coins.
In ten years, Neil saves 336 * 10 = <<336*10=3360>>3360 coins.
Together, Rong and Neil have saved 240 + 3360 = <<240+3360=3600>>3600 coins.

$\boxed{3600}$

\vspace{0.2cm}
\textcolor{red}{\textbf{Steps: 34, Tokens: 219, Time: 5.7s}}
\\
\hline
\end{tabular}}
\end{table}

\begin{table}[h]
\centering
\caption{Example Flash-Verify (without cache) — A qualitative comparison of different $\gamma$ values, LLaDA-1.5}
\vspace{0.3cm}

\noindent\rule{\textwidth}{0.4pt}
\textbf{Prompt:} \textit{A nurses’ station orders bandages in bulk packs of 50. On the first day, the nurses used 38 bandages and ordered one bulk pack of bandages. On the second day, they used ten fewer bandages. On the third day, they ordered two bulk packs of bandages and only used half a pack. They had 78 bandages left at the end of the third day. How many bandages did they start with on the first day?}
\noindent\rule{\textwidth}{0.4pt}

\vspace{0.5cm}
\resizebox{0.99\linewidth}{!}{ 
\begin{tabular}{|>{\columncolor{lightblue}}p{5cm}|>{\columncolor{lightgreen}}p{5cm}|>{\columncolor{lightpink}}p{5cm}|}
\hline
\rowcolor{headercolor}
\textcolor{white}{\textbf{$\gamma=0.8$}} & \textcolor{white}{\textbf{$\gamma=0.9$}} & \textcolor{white}{\textbf{$\gamma=0.95$}} \\
\hline
\footnotesize

 First, find the total number of bandages used on the first day: 38 bandages + 50 bandages = <<38+50=88>>88 bandages
Then find the total number of bandages used on the second day: 88 bandages - 10 bandages = <<88-10=78>>78 bandages
Then find the total number of bandages used on the third day: 2 * 50 bandages = <<2*50=100>>100 bandages
Then find the total number of bandages used on the third day: 100 bandages / 2 = <<100/2=50>>50 bandages
Then find the total number of bandages used over the three days: 88 bandages + 78 bandages + 50 bandages = <<88+78+50=216>>216 bandages
Then add the number of bandages left at the end of the third day to find the total number of bandages started: 216 bandages + 78 bandages = <<216+78=294>>294 bandages

Then 294 bandages

$\boxed{294}$

\vspace{0.2cm}
\textcolor{red}{\textbf{Steps: 38, Tokens: 298, Time:11.3s}}
&
\footnotesize

First, let's total how many bandages were used on the first day and how many were ordered:
- Used: 38 bandages
- Ordered: 1 bulk pack = 50 bandages

So, the total number of bandages at the end of the first day is:
\[ 38 + 50 = 88 \]

Next, let's determine how many bandages were used on the second day:
- Used: 38 - 10 = 28 bandages

So, the total number of bandages at the end of the second day is:
\[ 88 - 28 = 60 \]

Now, let's determine how many bandages were used on the third day:
- Ordered: 2 bulk packs = 100 bandages
- Used: 1/2 = 50 bandages

So, the total number of bandages at the end of the third day is:
\[ 60 + 100 - 50 = 110 \]

We know that they had 78 bandages left at the end of the third day, so the total number of bandages they started with on the first day is:
\[ 110 + 78 = 188 \]

Therefore, they started with 188 bandages on the first day.

$\boxed{188}$

\vspace{0.2cm}
\textcolor{red}{\textbf{Steps: 71, Tokens: 330, Time: 22.0s}}
&
\footnotesize

First, find the total number of bandages used on the first day: 38 bandages + 50 bandages = <<38+50=88>>88 bandages
Then find the total number of bandages used on the second day: 88 bandages - 10 bandages = <<88-10=78>>78 bandages
Then find the total number of bandages used on the third day: 2 * 50 bandages = <<2*50=100>>100 bandages
Then find the total number of bandages used over the three days: 88 bandages + 78 bandages + 100 bandages = <<88+78+100=266>>266 bandages
Then add the number of bandages left at the end of the third day to the total number of bandages used to find the total number of bandages at the start of the first day: 266 bandages + 78 bandages = <<266+78=344>>344 bandages

$\boxed{344}$

\vspace{0.2cm}
\textcolor{red}{\textbf{Steps: 57, Tokens: 271, Time: 19.8s}}
\\
\hline
\end{tabular}}
\end{table}



\end{document}